\def\E{\mathbb{E}}
\def\1{\mathbf{1}}
\def\Pr{\mathbb{P}}
\declaretheorem[name=Lemma]{lemma}
\declaretheorem[name=Definition]{definition}
\declaretheorem[name=Axiom]{axiom}
\DeclareMathOperator*{\argmax}{arg\,max}
\DeclareMathOperator*{\argmin}{arg\,min}
\def\actions{\mathcal{A}}
\def\states{\mathcal{S}}
\def\trajectories{\mathcal{H}}
\def\traj{h}
\begin{document}

\title{Choice Between Partial Trajectories: \\Disentangling Goals from Beliefs}
\author{
    Henrik Marklund\thanks{Computer Science, Stanford University (\texttt{marklund@stanford.edu}).} \and 
    Benjamin Van Roy\thanks{Electrical Engineering and Management Science \& Engineering, Stanford University (\texttt{bvr@stanford.edu}).}
}
\date{}
\maketitle

\begin{abstract}
As AI agents generate increasingly sophisticated behaviors, manually encoding human preferences to guide these agents becomes ever more challenging. To address this, it has been suggested that agents instead learn preferences from human choice data.  This approach requires a model of choice behavior that the agent can use to interpret the data. For choices between partial trajectories of states and actions, previous models assume choice probabilities to be determined by the partial return or the cumulative advantage.

We consider an alternative model based instead on the bootstrapped return, which adds to the partial return an estimate of the future return.  Benefits of bootstrapped return over partial return and cumulative advantage models stem from their treatment of human beliefs.  Unlike partial return, choices based on cumulative advantage or bootstrapped return reflect human beliefs about the environment.  Further, while recovering the reward function from choices based on cumulative advantage requires that those beliefs are correct, doing so from choices based on bootstrapped return does not.  In this sense, the bootstrapped return model disentangles the human's goals from their beliefs.

To motivate the bootstrapped return model, we formulate axioms and prove an Alignment Theorem.  This result formalizes how, for a general class of human preferences, such models are able to disentangle goals from beliefs.  This ensures recovery of an {\it aligned} reward function when learning from choices based on bootstrapped return. 

The bootstrapped return model also affords greater robustness to choice behavior.  Even if human choices are based on partial return, learning via a bootstrapped return model recovers an aligned reward function.  The same holds with choices based on the cumulative advantage if the human and the agent both adhere to correct and consistent beliefs about the environment.  On the other hand, if choices are based on bootstrapped return, learning via partial return or cumulative advantage models does not generally produce an aligned reward function.
\end{abstract}

\section{Introduction}
\label{se:introduction}

To align an AI agent with human goals, it is common to design the agent to accumulate rewards that express human preferences.  Manual specification of an effective reward function, however, is notoriously difficult \citep{akrour2012april,hadfield2016cooperative}.  And mispecification leads to 
reward hacking.  Alignment can be improved via a reward function learned from human choice data \citep{akrour2012april,sadigh2017active, christiano2017deep,ibarz2018reward,brown2019deep}.

To generate choice data, it is common to present a human with pairs of action-observation trajectories and ask which they prefer.  Inferring the human's goals from this data requires a model of human choice behavior.  In existing models, the probability of choosing one trajectory over another is determined by scores assigned to the two, with each score calculated using a reward function that expresses the human's goals.   
\citet{christiano2017deep} popularized the use of partial return, expressed in the first row of Table \ref{tab:comparison_models}.  We use the term {\it partial} to indicate that this is the return over a partial trajectory, which is of finite length, as opposed to an infinite trajectory.  

\begin{figure}[H]
\centering
\includegraphics[width=4in]{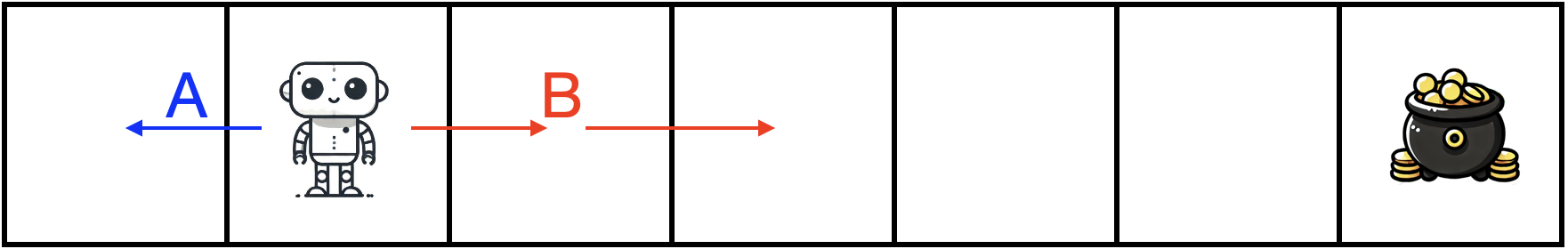}
\caption{The human wants the agent to reach the treasure quickly.  Trajectory A is preferred by a human who believes the grid wraps around so that after exiting to the left the agent appears on the right.  Trajectory B is preferred by a human who believes that moving left leads to a dead end.}
\label{fig:1dGrid}
\vspace{-.0cm}
\end{figure}

\begin{table}[h]
\centering
\begin{tabular}{|c|c|c|}
\hline
\textbf{score} & \textbf{formula} & \textbf{comment} \\ 
\hline
& & \\
partial return & $\sum_{t=0}^{T-1} r(s_t, a_t)$ & 
assumes choices uninfluenced by human beliefs \\
& & \\
\hline
& & \\
cumulative advantage & $\sum_{t=0}^{T-1} \Delta_*(s_{t},a_t)$ & 
infers goals that depend on human beliefs \\
& & \\
\hline
& & \\
bootstrapped return & $\sum_{t=0}^{T-1} r(s_t,a_t) + V(s_T)$ & 
addresses above concerns \\
& & \\
\hline
\end{tabular}
\caption{Trajectory scores used in choice models.}
\label{tab:comparison_models}
\end{table}

Choices based on partial return are not influenced by human beliefs about the environment.  This counters common sense.  For example, consider the grid environment of Figure \ref{fig:1dGrid}. Suppose the human's goal is for the robot to reach the treasure as quickly as possible. We can model this goal as the human attributing large reward to reaching the treasure and a small cost per timestep en route. Then, trajectory B incurs twice the cost of trajectory A; the partial returns are $-2$ for B and $-1$ for A.  In spite of that, it is natural for a human to prefer whichever trajectory maximizes progress toward the treasure.  Under reasonable beliefs about the environment, this could be trajectory B.  However, if the human believes that the grid wraps around -- so that after exiting to the left the agent appears on the right -- then A  is preferred.  This is because the human's quality assessment depends on beliefs about how each trajectory will subsequently play out.  As another example, consider a human choosing between chess trajectories. Assume that the human's sole goal is winning. It seems reasonable to suppose that the human's evaluation of a trajectory will be influenced by how the human thinks the game will subsequently play out, given the chess board at the end of the trajectory. More generally, it seems plausible that human choices are influenced by what they believe will be the continuation of a partial trajectory. This is in line with empirical results of \citep[Section 3.1.1]{christiano2017deep}, which demonstrate that when using the partial return model the learned reward functions express anticipated rewards. This suggests that, in practice, choices are made not only based on partial return, but also on anticipated future rewards.

To better reflect human choice behavior, \citet{knox2024models} proposed the cumulative advantage\footnote{\citet{knox2024models} actually proposed regret, which is the negation of cumulative advantage.  We use the latter so that we can more conveniently compare to other positive score formulas.}, as expressed in the second row of Table \ref{tab:comparison_models}.  Each advantage $\Delta_*(s_t,a_t) = Q_*(s_t,a_t) - V_*(s_t)$ measures the 
the difference between the optimal value $Q_*(s_t,a_t)$ of the executed action and the optimal state value $V_*(s_t)$.

When learning via the cumulative advantage model, a reward function is deduced from what is learned about advantages.  A weakness of this approach is in its reliance on correctness of the human's beliefs about the environment and an optimal policy.  In many applications, such an assumption is not tenable.  Indeed, one of the motivations for building AI systems is that they will assist in making effective decisions where humans are unable to. 

To elucidate on how cumulative advantage relies on correct beliefs, let us revisit Figure \ref{fig:1dGrid}.  Suppose the agent is currently positioned as in the figure.  Whether trajectory A or B is preferred depends on whether the human believes that the grid wraps around.  Let us assume that the grid does not wrap around, but that the human believes it does and therefore prefers trajectory $A$. Then, an agent with correct knowledge of the environment will infer from advantages a large reward to the left. Thus, the agent will infer a reward function that does not express the human's goals.

Motivated by these limitations, we consider an alternative model based on the bootstrapped return, as expressed in the third row of Table \ref{tab:comparison_models}.  The bootstrapped return adds to the partial return an estimate of expected future reward, expressed by a value function.  In the reinforcement learning literature, what we refer to as {\it the bootstrapped return} is often called the $N$-{\it step return} \citep{sutton2020reinforcement}.

The bootstrapped return with an optimal value function was briefly explored in the Appendix of \citet{knox2024models}.  We consider the concept more generally, with an arbitrary value function that expresses the human's beliefs about expected future reward.  Importantly, we do not assume that the human's value estimates are accurate.  Since the bootstrapped return depends on these estimates, choices reflect human beliefs.  Furthermore, as we will establish later in the paper, recovering the reward function via a bootstrapped return model does not require correctness of human beliefs.

Let us revisit the example of Figure \ref{fig:1dGrid} to illustrate these benefits over partial return and cumulative advantage.  While the partial return would encourage choice of A over B, either choice could be preferred according to the bootstrapped return.  This is because depending on the human's beliefs, the value estimate could be either large or small at the left end of the grid. As another example, consider a human choosing between chess trajectories. Under the bootstrapped return model, the human's evaluation of a trajectory will be influenced by how the human thinks the game will subsequently play out, given the chess board at the end of the trajectory.  In this way, the bootstrapped return accounts for the influence of human beliefs on choice.

Recall that when learning via the cumulative advantage model the agent can impute erroneous large reward at the left end of the grid.  This happens if the human believes the grid wraps around but the agent knows it does not and assumes the human knows that as well.  Suppose now that these beliefs are expressed through a bootstrapped reward model.  Will the agent infer the correct reward function unlike with cumulative advantage? The answer is yes. In this case, there will be a large value estimate at the left end to reflect proximity to the treasure.  And fitting a bootstrapped model to corresponding choice data recovers this value function and the human's reward function $r$.  In this sense, the bootstrapped return model disentangles goals, as expressed by $r$, from beliefs, which are expressed through the value function. This is crucial. For instance, in the game of chess, while the human may often choose trajectories in which the opponent's queen is captured, the agent ought to learn a reward function that reflects that the human's sole goal is to win. The human only wishes that the queen is captured in so far as the human believes that it will lead to winning down the line.

To motivate the bootstrapped return model, in the next section, we formulate axioms and prove an Alignment Theorem.  This result formalizes how, for a very general class of human preferences, such models are able to disentangles goals from beliefs.  In Section 3, we describe a practical algorithm for learning from stochastic choice data via a logit choice model based on bootstrapped return.  Computational results demonstrate efficient recovery of the reward function even when the human makes choices based on erroneous beliefs about the environment.  Section 4 discusses how properties of the bootstrapped return result in greater robustness to choice behavior when learning a reward function.  We discuss extensions and limitations in a closing section.

\section{Axiomatic Motivation}

We will establish that, under two axioms, a human's preferences between infinite trajectory lotteries are determined by their preferences between partial trajectories.  We begin by formally defining these terms and then state axioms and results.

\subsection{Trajectories and Lotteries}

Let $\mathcal{S}$ and $\mathcal{A}$ be finite state and action spaces.  We refer to any sequence of state-action pairs as a \textit{trajectory}.  An {\it infinite trajectory} takes the form $(s_0,a_0,s_1,a_1,\ldots)$.  Let $\trajectories_\infty$ be the set of all infinite trajectories.  A {\it partial trajectory} takes the form $(s_0,a_0,\ldots, a_{T-1},s_T)$, beginning and ending at states and lasting over any finite duration $T$. Let $\trajectories$ be the set of all partial trajectories.

A policy $\pi$ assigns a probability $\pi(a|s)$ to each action given a current state $s$.  A transition matrix $P \in \mathbb{R}^{|\mathcal{A} | \times |\mathcal{S}| \times |\mathcal{S}|}$ assigns a probability $P_{ass'}$ over next state $s'$ given a current state-action pair $(s,a)$.

Each state $s$, policy $\pi$, and transition matrix $P$, induce a distribution $\mathbb{P}_{\pi, P}(\cdot | s)$ over infinite trajectories.  For any partial trajectory $h = (s_0,a_0,\ldots,s_T)$, let $\mathbb{P}_{\pi, P}(\cdot | h)$ be the distribution over infinite trajectories that begin with $h$.  In particular, if $H_\infty = (S_0,A_0,S_1,A_1,\ldots)$ is sampled from $\mathbb{P}_{\pi, P}(\cdot | h)$ then $(S_0,A_0,\ldots,S_T) = h$, and $(S_T, A_T, S_{T+1}, \ldots)$ is distributed according to $\mathbb{P}_{\pi, P}(\cdot | S_T)$.

A {\it lottery} over infinite trajectories is a probability distribution over infinite trajectories.  For example, for any policy $\pi$, transition matrix $P$, and partial trajectory $h$, the probability distribution $\mathbb{P}_{\pi, P}(\cdot | h)$ is a lottery.  Given a lottery $\ell$ and function $f:\trajectories_\infty \rightarrow\Re$, we denote by $\E_\ell[f(H_\infty)]$ the expectation of $f(H_\infty)$ with $H_\infty = (S_0,A_0,S_1,A_1,\ldots)$ sampled from $\ell$.  Similarly, we denote by $\E_{\pi,P}[f(H_\infty)|h]$ the expectation with respect to the lottery $\mathbb{P}_{\pi, P}(\cdot | h)$.

To simplify analysis we will assume that a notion of {\it relative state frequency} is defined for each lottery.  In particular, we will restrict attention to lotteries $\ell$ for which the limit $\lim_{T \to \infty }\E_\ell[\frac{1}{T} \sum_{t=0}^{T-1} \1(S_t = s)]$ exists for each $s \in \mathcal{S}$.  Let $\mathcal{L}_\infty$ be the set of such lotteries.

\subsection{Preferences between Lotteries}

Let $\succeq$ be a binary relation on $\mathcal{L}_\infty$. We interpret this relation as indicating the human's preferences between infinite trajectory lotteries. For each infinite trajectory lottery $\ell \in \mathcal{L}_\infty$, discount factor $\gamma \in (0,1)$, and function $r:\states\times\actions \rightarrow \Re$, let
$$
v_{\ell, r, \gamma} = \mathbb{E}_{\ell}\left[\sum_{t=0}^\infty \gamma^t r(S_{t}, A_{t})\right].
$$
We interpret this as the expected discounted return of the lottery.  A reward function expresses preferences between infinite trajectory lotteries in the following sense.
\begin{mdframed}
\begin{definition}
\label{def:express-infinite}
{\bf (expressing $\succeq$)}
A function $r:\states\times\actions \rightarrow \Re$ is said to express $\succeq$ if, for all $\ell, \ell' \in \mathcal{L}_\infty$,
$$\ell \succeq \ell' \qquad \text{if and only if} \qquad \lim_{\gamma \uparrow 1} (v_{\ell, r, \gamma} - v_{\ell', r, \gamma}) \geq 0.$$
\end{definition}
\end{mdframed}
Loosely speaking, when $r$ expresses $\succeq$, lotteries with higher expected cumulative reward are preferred. The above preference structure corresponds to comparing lotteries based on a so-called bias-optimality criterion \citet[See Section 5.4.3]{puterman2014markov}. If one lottery has a higher average reward, that lottery is preferred. If the two lotteries have equal average reward, then the lottery with the highest so-called bias is preferred.

\subsection{Preferences between Partial Trajectories}

In practice, we cannot elicit preferences between infinite trajectory lotteries.  This is because infinite trajectories -- let alone distributions over infinite trajectories -- comprise too much data for a human to process.  Instead, we can elicit preferences between partial trajectories.  This can be done, for example, by presenting pairs of partial trajectories to a human and observing choices.

To model what drives these choices, we consider a binary relation $\succeq_\partial$ between partial trajectories $\trajectories$.  We interpret $\succeq_\partial$ as indicating the human's preferences between partial trajectories.

We now define what it means for a pair of functions $(r,V)$ to express preferences between partial trajectories.
\begin{mdframed}
\begin{definition}
\label{def:express-partial}
{\bf (expressing $\succeq_\partial$)}
A pair $(r,V)$, comprised of functions $r:\states\times\actions \rightarrow \Re$ and $V:\states\rightarrow\Re$, is said to express $\succeq_\partial$ if, for all $h,h'\in \mathcal{H}$,
$$h \succeq_\partial h' \qquad \text{if and only if} \qquad \sum_{t=0}^{T-1} r(s_t,a_t) + V(s_T) \geq \sum_{t=0}^{T'-1} r(s'_t,a'_t) + V(s'_{T'}),$$
where $(s_0,a_0,\ldots,s_T) = h$ and $(s'_0,a'_0,\ldots,s'_{T'}) = h'$.
\end{definition}
\end{mdframed}
We refer to the quantity $\sum_{t=0}^T r(s_t,a_t) + V(s_T)$ as a {\it boostrapped return}.  It is bootstrapped in the sense that it relies on a value $V(s_T)$ that can be interpreted as approximating subsequent return.  In the definition, $(r,V)$ expresses preferences via comparing bootstrapped returns.

\subsection{Alignment}
In this section, we relate preferences over partial trajectories to preferences over infinite trajectory lotteries.  We do so through proving a result that relies on two axioms: one constrains $\succeq$ and the other constraints how $\succeq_\partial$ relates to $\succeq$.

Our first axiom requires that $\succeq$ can be expressed by a reward function.  One could alternatively assert more primitive axioms, along the lines considered in rational choice theory \citep{von1947theory,koopmans1960stationary,koopmans1972representation,bastianello2019time}, that imply reward representation.  We forgo that and simply assert reward representation itself as an axiom in order to focus attention on what is required beyond that to form our theory of alignment.
\begin{mdframed}
\begin{axiom}
\label{ax:reward-representation}
{\bf (reward representation)}
There exists a function $r:\states\times\actions \rightarrow \Re$ that expresses $\succeq$.
\end{axiom}
\end{mdframed}

Our next axiom requires a form of alignment -- or consistency -- between preferences over partial trajectories and preferences over infinite trajectory lotteries.

This axiom refers to a pair $(\pi, P)$ that for each trajectory $h$ induces a lottery $\Pr_{\pi, P}(\cdot |h) \in \mathcal{L}_\infty$. This lottery generates an infinite trajectory that begins with $(S_0,A_0,\ldots,S_T) = h \in \trajectories$ and then follows a Markov process with state space $\states\times\actions$.

The axiom refers to a set $\mathcal{P}$, which we define to be the set of pairs $(\pi,P)$ for which $\lim_{T \rightarrow \infty} \E_{\pi,P}[\frac{1}{T} \sum_{t=0}^{T-1} \1(S_t=s') | s]$ does not depend on $s \in \states$.  In other words, $(\pi,P) \in \mathcal{P}$ if the initial state of the Markov chain does not impact expected relative state frequencies.  In other words, the Markov chain has a unique stationary distribution.
\\
\begin{mdframed}
\begin{axiom}
\label{ax:alignment}
{\bf (alignment)}
There exists $(\pi, P) \in \mathcal{P}$ such that, for all partial trajectories $h$ and $h'$,
\begin{align*}
h \succeq_\partial h' && \mathrm{if\ and\ only\ if} && \mathbb{P}_{\pi, P}(\cdot | h) \succeq \mathbb{P}_{\pi, P}(\cdot | h').
\end{align*}
\end{axiom}
\end{mdframed}

To interpret this axiom, think of $(\pi, P)$ as how the human imagines the future to be generated.  The axiom says that the human will prefer a partial trajectory $h$ over $h'$ if and only if the infinite trajectory lottery $\mathbb{P}_{\pi, P}(\cdot | h)$ is preferred over $\mathbb{P}_{\pi, P}(\cdot | h')$.  The requirement that $(\pi, P) \in \mathcal{P}$ can be interpreted as assuming the human believes that each partial trajectory bears transient impact on the future state evolution.

The following result, which relates $\succeq$ to $\succeq_\partial$, provides our axiomatic motivation for modeling preferences in terms of bootstrapped return.
\begin{mdframed}
\begin{restatable}[]{theorem}{alignment}
\label{th:alignment}
{\bf (alignment)}
Suppose that $(\succeq, \succeq_\partial)$ satisfies Axioms \ref{ax:reward-representation} and \ref{ax:alignment}.\\
\begin{tabular}{ll}
1. & There exists a pair $(r, V)$ that expresses $\succeq_\partial$. \\
2. & If $(r,V)$ expresses $\succeq_\partial$ then $r$ expresses $\succeq$.
\end{tabular}
\end{restatable}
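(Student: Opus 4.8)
The plan is to compute, for the reward function $r$ supplied by Axiom~\ref{ax:reward-representation} and the pair $(\pi,P)\in\policies$ supplied by Axiom~\ref{ax:alignment}, the quantity $\lim_{\gamma\uparrow1}\big(v_{\mathbb{P}_{\pi,P}(\cdot\mid h),r,\gamma}-v_{\mathbb{P}_{\pi,P}(\cdot\mid h'),r,\gamma}\big)$ in closed form and to recognize it as a difference of bootstrapped returns; both parts of the theorem then follow by chaining the two axioms.

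For Part~1, I would first split the discounted return of the lottery $\mathbb{P}_{\pi,P}(\cdot\mid h)$ into its fixed prefix $h=(s_0,a_0,\dots,s_T)$ and the continuation, which follows the Markov reward process induced by $(\pi,P,r)$ from $s_T$:
$$v_{\mathbb{P}_{\pi,P}(\cdot\mid h),r,\gamma}=\sum_{t=0}^{T-1}\gamma^t r(s_t,a_t)+\gamma^T V_\gamma(s_T),\qquad V_\gamma(s):=\E_{\pi,P}\Big[\textstyle\sum_{u\ge0}\gamma^u r(S_u,A_u)\,\Big|\,s\Big].$$
The membership $(\pi,P)\in\policies$ means exactly that the state chain with kernel $\bar P_{ss'}=\sum_a\pi(a\mid s)P_{ass'}$ is unichain, so its Ces\`aro limit has identical rows and the gain is a state-independent constant $\rho$. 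The Laurent/Blackwell expansion of the discounted value function then gives $V_\gamma(s)=\rho/(1-\gamma)+\tilde V(s)+o(1)$ as $\gamma\uparrow1$, uniformly in $s$, where $\tilde V$ is the bias function. Substituting and taking $\gamma\uparrow1$ — using $\gamma^t r(s_t,a_t)\to r(s_t,a_t)$, $\gamma^T\tilde V(s_T)\to\tilde V(s_T)$, and $(\gamma^T-\gamma^{T'})/(1-\gamma)\to T'-T$ — yields
$$\lim_{\gamma\uparrow1}\big(v_{\mathbb{P}_{\pi,P}(\cdot\mid h),r,\gamma}-v_{\mathbb{P}_{\pi,P}(\cdot\mid h'),r,\gamma}\big)=\Big(\sum_{t=0}^{T-1}\hat r(s_t,a_t)+\hat V(s_T)\Big)-\Big(\sum_{t=0}^{T'-1}\hat r(s'_t,a'_t)+\hat V(s'_{T'})\Big),$$
with $\hat r:=r-\rho$ and $\hat V:=\tilde V$. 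Since $(\pi,P)\in\policies$ the two lotteries lie in $\mathcal{L}_\infty$, so Axiom~\ref{ax:reward-representation} applies to them; combining it with Axiom~\ref{ax:alignment} and the last display shows $h\succeq_\partial h'$ iff the bootstrapped return of $h$ under $(\hat r,\hat V)$ is at least that of $h'$, i.e.\ $(\hat r,\hat V)$ expresses $\succeq_\partial$. Moreover subtracting the constant $\rho$ from $r$ leaves every difference $v_{\ell,r,\gamma}-v_{\ell',r,\gamma}$ unchanged, so $\hat r$ also expresses $\succeq$.

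For Part~2, let $(r',V')$ be any pair expressing $\succeq_\partial$; then the bootstrapped-return maps associated with $(r',V')$ and with $(\hat r,\hat V)$ induce the same total preorder on $\trajectories$. I would use that $\trajectories$ contains every finite state-action sequence, in particular self-loops $(s,a,s)$ and their concatenations. Comparing $(s,a,s)$ and $(s,b,s)$ with $(s)$ shows $r'(s,a)$ and $\hat r(s,a)$ always have the same sign; comparing the $n$-fold loop on $(s,a)$ with the $m$-fold loop on $(s,b)$ (the $V$-terms cancel) shows $n\,r'(s,a)\ge m\,r'(s,b)\iff n\,\hat r(s,a)\ge m\,\hat r(s,b)$ for all $n\ge1$, $m\ge0$, so a Dedekind-cut argument gives $r'(s,\cdot)=\alpha_s\,\hat r(s,\cdot)$ for a common $\alpha_s\ge0$ at each $s$; cycles visiting several states (e.g.\ comparing $(s,a,s',b,s)$ with $(s,a,s,a,s)$, and their iterates) then force $\alpha_s$ to equal a single constant $\alpha\ge0$ across all states. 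When $\alpha>0$ we get $r'=\alpha\hat r$, hence $v_{\ell,r',\gamma}=\alpha\,v_{\ell,\hat r,\gamma}$ for every $\ell$, so $r'$ expresses $\succeq$ because $\hat r$ does. When $\alpha=0$, i.e.\ $\hat r\equiv0$ (equivalently $r$ constant), $\succeq_\partial$ orders trajectories only by endpoint, which forces $r'\equiv0$, and both $r$ and $r'$ render all lotteries $\succeq$-equivalent, so $r'$ expresses $\succeq$ here as well.

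The main obstacle is making Part~1's Laurent-expansion step fully rigorous — existence of the Ces\`aro limit, the implication that the unichain hypothesis makes the gain constant, and uniformity of the $o(1)$ in $s$ — and, in Part~2, the passage from ``same preorder on bootstrapped returns'' to ``$r'$ is a nonnegative multiple of $\hat r$,'' which requires the Cauchy/Dedekind argument above together with careful handling of the sign and vanishing cases; the remainder is bookkeeping with the two axioms.
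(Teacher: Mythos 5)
Your proposal is correct and its overall architecture matches the paper's: Part 1 is proved by splitting $v_{\mathbb{P}_{\pi,P}(\cdot\mid h),r,\gamma}$ into the prefix return plus $\gamma^T V_\gamma(s_T)$ and showing the $\gamma\uparrow 1$ limit of the difference is a difference of bootstrapped returns under the gain-adjusted reward $r-\rho$ and the bias function; this is exactly the paper's relative reward $\tilde r_{\pi,P,r}=r-\overline r_{\pi,P,r}$ and relative value $\tilde V_{\pi,P,r}$, with your Laurent-expansion phrasing replacing the paper's direct algebraic decomposition of $\tilde v_{\ell,r}$ (Lemmas \ref{le:RelativeValue} and \ref{le:PreferencesViaRelativeValue}). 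Where you genuinely diverge is the key uniqueness step behind Part 2. The paper's Lemma \ref{le:UniquenessOfExpression} fixes a distinguished terminal state $\overline s$ so that $V$-terms cancel within each $\overline{\mathcal H}_T$, normalizes both rewards to $[0,1]$ via their common argmax/argmin, and derives a contradiction from a law-of-large-numbers comparison of a deterministic loop against a randomized Bernoulli trajectory; it concludes only that $r'=\alpha r+\beta$ with $\alpha>0$, which suffices since positive affine transformations preserve Definition \ref{def:express-infinite}. You instead cancel $V$-terms by comparing iterated self-loops at a common state and extract exact ratio equalities by a Dedekind-cut argument over integer multiplicities, then propagate the scale factor across states via multi-state cycles. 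Your route is more elementary (no probabilistic limit argument) and, because it exploits comparisons between trajectories of \emph{different} lengths, actually yields the sharper conclusion $\beta=0$ whenever $r$ is non-constant — strictly more than the paper proves, though the extra strength is not needed for the theorem. The gaps you flag yourself (rigor of the Blackwell expansion, the degenerate cases where some $\hat r(s,\cdot)\equiv 0$, and the sign/mixed-sign bookkeeping in the ratio argument) are real but routine, and the paper leaves its analogous steps (existence of the Ces\`aro limits and of $\tilde V$) at a comparable level of detail.
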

\end{mdframed}
The first part of this theorem indicates that preferences between partial trajectories are determined by bootstrapped returns, calculated via some reward function and value function.  The second part indicates that this reward function, in turn, determines preferences between lotteries.

Theorem \ref{th:alignment} is proved in the appendix.  Within the proof, a particular expression $(\tilde{r}_{\pi,P,r}, \tilde{V}_{\pi,P,r})$ surfaces as a proof of existence that justifies Statement 1 of Theorem \ref{th:alignment}.  In particular, $(\tilde{r}_{\pi,P,r}, \tilde{V}_{\pi,P,r})$ expresses $\succeq_\partial$.  Given a reward function $r$ that expresses $\succeq$, the first element of this pair is a relative reward function $\tilde{r}_{\pi,P,r}(s,a) = r(s,a) - \overline{r}_{\pi,P,r}$, which measures reward relative to the average reward induced by $(\pi,P)$.  The second element is the relative value function $\tilde{V}_{\pi,P,r}$, which measures expected cumulative relative reward.  For each state $s$, $\tilde{V}_{\pi,P,r}(s)$ is finite because the expected reward approaches its average.

Another notable implication is that the relative reward function $\tilde{r}_{\pi,P,r}$ expresses $\succeq$.  Hence, preferences between infinite trajectory lotteries are determined by preferences between partial trajectories, encoded in terms of the relative reward function $\tilde{r}_{\pi,P,r}$ and the relative value function $\tilde{V}_{\pi,P,r}(s)$.  Note that the relative value function can be ignored, $\tilde{r}_{\pi,P,r}$ by itself expresses $\succeq$.

Preferences $\succeq$ between infinite trajectory lotteries do not necessarily depend on $\pi$ and $P$, which represent what the human imagines might transpire.  It may therefore seem remarkable that learning the relative reward function $\tilde{r}_{\pi,P,r}$, which does depend on $\pi$ and $P$, can reveal $\succeq$.  This is possible because if $\succeq$ is expressed by a reward function $r$ that does not depend on $\pi$ and $P$ then $\tilde{r}_{\pi,P,r}(s,a) = r(s,a) - \overline{r}_{\pi,P,r}$ must express the same preferences.  It is only the constant offset $\overline{r}_{\pi,P,r}$ that depends on $\pi$ and $P$. We note that the reason the offset is constant, rather than varying with state, is due to the assumption that $(\pi,P) \in \mathcal{P}$.

\subsection{Fictitious States and Actions}

A pair of preferences $(\succeq, \succeq_\partial)$ can violate Axiom \ref{ax:alignment} though they ought to satisfy the conclusions of Theorem \ref{ax:alignment}, which ensure that $\succeq_\partial$ can be expressed by some $(r,V)$ and that the reward function $r$ is aligned with $\succeq$.  At a high level, the issue is that our current axiom demands a kind of consistency between $V$ and $r$. For instance, if $V=0$ everywhere, that implies $r$ must be $0$ everywhere also. This is an unnecessarily strong restriction and could motivate weakening the axiom.  However, the issue can alternatively be addressed by expanding the state and action spaces.  We now explain the nature of such preferences and how to address them.

We begin with an example.  Consider two states $\states = \{1,2\}$ and a single action $\actions = \{1\}$.  Suppose $\succeq$ is expressed by a function $r$ for which $r(1,1) = 1$ and $r(2,1) = -1$.  Suppose $\succeq_\partial$ is expressed by $(r,V)$, with $V(s) = 0$ for $s \in \{1,2\}$.  Then, Axiom \ref{ax:alignment} is violated.  To see why, consider $h=(1,1,2)$ and $h'=(1)$.  The bootstrapped returns are $r(1,1) + V(2) = 1$ and $V(1) = 0$.  Hence, $h \succ_\partial h'$.  Consider lotteries $\ell$ and $\ell'$ induced by starting with either $h$ or $h'$ and then following $(\pi,P)$.  For all $(\pi,P) \in \mathcal{P}$, the lottery that begins with $h'$ ought to be at least as desirable as the one that begins with $h$.  This is because $h$ started as $h'$ and realized a worst-case transition, which was to state $2$.  As such, it ought to be the case that $\ell \preceq \ell'$, violating Axiom \ref{ax:alignment}.

A modification that incurs no consequential change to our example yields one that satisfies the axiom.  In particular, introduce a fictitious state $3$ and a fictitious action $2$ to produce enlarged state and action spaces $\states = \{1,2,3\}$ and $\actions = \{1,2\}$.  Let the reward and value functions be as before where already defined, and to assign values where they are not defined, let $r(\cdot,2) = r(3,\cdot) = 0$ and $V(3) = 0$.  Let $\succeq_\partial$ expressed by $(r,V)$.  It is easy to verify that Axiom \ref{ax:alignment} is satisfied with a policy $\pi$ that always selects action $2$ and a matrix $P$ that always transitions to state $3$.  Hence, Theorem \ref{th:alignment} holds, and thus $r$ expresses $\succeq$.  If we restrict attention to trajectories that avoid the fictitious state, $r$ still identifies preferences between lotteries over such trajectories.  Hence, while $r$ is defined over larger state and action spaces, restricting the domain gives us a reward function that is aligned with $\succeq$ in our original example.

Our example illustrated how enlarging a state space addresses situations where Axiom \ref{ax:alignment} is violated yet preferences ought to satisfy the conclusions of Theorem \ref{th:alignment}.  The following corollary accommodates more generally the sort of enlargement we considered in the example.  We use the term {\it on $(\states', \actions')$} to indicate restriction to trajectories that only include states and actions in $\states'$ and $\actions'$.
\begin{mdframed}
\begin{restatable}[]{corollary}{alignment_co}
\label{co:alignment}
{\bf (alignment on a subset)}
Suppose that $(\succeq, \succeq_\partial)$ satisfies Axioms \ref{ax:reward-representation} and \ref{ax:alignment}.  For any $\states' \subseteq \states$ and $\actions'\subseteq\actions$: \\
\begin{tabular}{ll}
1. & There exists a pair $(r, V)$ that expresses $\succeq_\partial$ on $(\states', \actions')$. \\
2. & If $(r,V)$ expresses $\succeq_\partial$ on $(\states', \actions')$ then $r$ expresses $\succeq$ on $(\states', \actions')$.
\end{tabular}
\end{restatable}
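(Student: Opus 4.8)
The plan is to reduce both statements to Theorem~\ref{th:alignment} applied to the full pair $(\states,\actions)$, using the fact that the property of expressing $\succeq_\partial$ (respectively, $\succeq$) only becomes easier to satisfy when the set of admissible partial trajectories (respectively, lotteries) shrinks. Throughout I assume $\states'$ and $\actions'$ are nonempty; otherwise both statements are vacuous or trivially true. Statement~1 is then immediate: by Theorem~\ref{th:alignment}.1 there is a pair $(r,V)$ expressing $\succeq_\partial$ on all of $\trajectories$, and the defining biconditional continues to hold in particular for every pair of partial trajectories that use only states in $\states'$ and actions in $\actions'$, so that same $(r,V)$ expresses $\succeq_\partial$ on $(\states',\actions')$.

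For Statement~2, fix a pair $(r,V)$ that expresses $\succeq_\partial$ on $(\states',\actions')$. Applying Theorem~\ref{th:alignment} on the full pair $(\states,\actions)$ produces a pair $(r^\circ,V^\circ)$ that expresses $\succeq_\partial$ on all of $\trajectories$ (part~1), and for that pair $r^\circ$ expresses $\succeq$ (part~2); in particular $r^\circ$ expresses $\succeq$ on $(\states',\actions')$. For a reward function $\rho$ and value function $W$ write $B_{\rho,W}(h)=\sum_{t=0}^{T-1}\rho(s_t,a_t)+W(s_T)$ for the bootstrapped return of $h=(s_0,a_0,\ldots,s_T)$. Then the hypothesis on $(r,V)$ and the line just established for $(r^\circ,V^\circ)$ say that, restricted to partial trajectories over $(\states',\actions')$, both $B_{r,V}$ and $B_{r^\circ,V^\circ}$ represent $\succeq_\partial$; hence they are ordinally equivalent there, i.e. $B_{r,V}(h)\ge B_{r,V}(h')$ iff $B_{r^\circ,V^\circ}(h)\ge B_{r^\circ,V^\circ}(h')$ for all such $h,h'$. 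The key step is to upgrade this to a positive affine relation between the reward functions: there is a constant $c>0$ with $r(s,a)=c\,r^\circ(s,a)$ for every $(s,a)\in\states'\times\actions'$ (with $c$ arbitrary in the degenerate case where $r^\circ$, and hence also $r$, vanishes on $\states'\times\actions'$). Granting this, for any $\ell,\ell'\in\mathcal{L}_\infty$ supported on infinite trajectories over $(\states',\actions')$ we have $v_{\ell,r,\gamma}=c\,v_{\ell,r^\circ,\gamma}$ for every $\gamma\in(0,1)$, so $\lim_{\gamma\uparrow1}(v_{\ell,r,\gamma}-v_{\ell',r,\gamma})\ge 0$ iff $\lim_{\gamma\uparrow1}(v_{\ell,r^\circ,\gamma}-v_{\ell',r^\circ,\gamma})\ge 0$ because the two differ by the positive factor $c$; since $r^\circ$ expresses $\succeq$ on $(\states',\actions')$, so does $r$, which is Statement~2.

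To produce the constant $c$ I would use ``loop'' trajectories, which are admissible because $\trajectories$ contains every finite state--action sequence with no reachability restriction. For $(s,a)\in\states'\times\actions'$ and a positive integer $n$, let $h^n_{s,a}=(s,a,s,a,\ldots,a,s)$ repeat action $a$ at state $s$ a total of $n$ times, so $B_{r,V}(h^n_{s,a})=n\,r(s,a)+V(s)$ and likewise for $r^\circ$. Comparing $h^{n+1}_{s,a}$ with $h^n_{s,a}$ shows $\operatorname{sign}r(s,a)=\operatorname{sign}r^\circ(s,a)$ for every $(s,a)$. Comparing $h^n_{s,a}$ with $h^m_{s,b}$ over all positive integers $n,m$ (the $V(s)$ terms cancel) then shows, by an elementary argument about which pairs $(n,m)$ satisfy $n\,r(s,a)\ge m\,r(s,b)$, that $r(s,\cdot)=c_s\,r^\circ(s,\cdot)$ on $\actions'$ for some $c_s>0$ whenever $r^\circ(s,\cdot)\not\equiv 0$ on $\actions'$ (and both vanish there otherwise). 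Finally, comparing the two-state loop $(s,a,s',a',s)$ repeated $n$ times, which accumulates $n\bigl[r(s,a)+r(s',a')\bigr]$, against $h^m_{s,a_1}$ with $r^\circ(s,a_1)\ne 0$ forces $r(s,a)+r(s',a')=c_s\bigl[r^\circ(s,a)+r^\circ(s',a')\bigr]$; subtracting $r(s,a)=c_s r^\circ(s,a)$ gives $r(s',a')=c_s r^\circ(s',a')$, whence $c_{s'}=c_s$ whenever $r^\circ(s',\cdot)\not\equiv 0$ on $\actions'$, and states where $r^\circ$ vanishes identically on $\actions'$ contribute $0=c\cdot 0$ regardless, so setting $c$ to this common value yields $r=c\,r^\circ$ on all of $\states'\times\actions'$. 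I expect this middle step to be the main obstacle: converting bare ordinal equivalence of bootstrapped returns into the proportionality $r=c\,r^\circ$, in particular handling rewards of mixed sign within a single state and propagating a single constant across all states. The loop gadgets reduce it to a slightly delicate elementary statement about order-isomorphisms of finitely generated additive sub-semigroups of the reals, which should be proved carefully (e.g. via a Dedekind-cut argument on the ratios $r(s,a)/r(s,b)$) rather than waved through; Statement~1, the reduction of Statement~2 to proportionality, and the passage back to lotteries are all routine.
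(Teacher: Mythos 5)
Your proof is correct, and its skeleton matches what the paper intends (the paper only remarks that the corollary ``follows almost immediately'' from Theorem~\ref{th:alignment}): Statement~1 by restricting the full-space witness $(\tilde{r}_{\pi,P,r_*},\tilde{V}_{\pi,P,r_*})$, and Statement~2 by comparing the given $(r,V)$ to that witness via uniqueness of the bootstrapped-return representation on the restricted trajectory set, then transferring ``expresses $\succeq$'' across the resulting transformation. Where you genuinely diverge is in how you prove that uniqueness. The paper's route is to reuse (the restriction to $(\states',\actions')$ of) Lemma~\ref{le:UniquenessOfExpression}, whose proof normalizes via $\argmax$/$\argmin$ over a distinguished terminal state and invokes the law of large numbers on long random trajectories to pin down intermediate reward values, concluding only positive \emph{affine} equivalence $r = \alpha r^\circ + \beta$; that weaker conclusion still suffices for Statement~2 because the $\beta/(1-\gamma)$ terms cancel in the difference $v_{\ell,r,\gamma}-v_{\ell',r,\gamma}$. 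Your loop-gadget argument is entirely deterministic and elementary (no probability), exploits comparisons between trajectories of \emph{different} lengths --- which the paper's lemma never uses --- and consequently delivers the strictly stronger conclusion of exact proportionality $r = c\,r^\circ$ on $\states'\times\actions'$. Both conclusions are consistent (different-length comparisons kill the offset $\beta$ unless $r$ is constant, in which case proportionality still holds), and both close the argument. The points you flag as delicate --- mixed signs within a single state and propagating one constant $c$ across states --- are real, but the two-action loop $(s,a,s,b,s)$ and the cross-state loop you describe do handle them, so I see no gap; you have simply traded the paper's probabilistic uniqueness lemma for a self-contained arithmetic one.
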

\end{mdframed}
This corollary follows almost immediately from Theorem \ref{th:alignment}.  To illustrate its use, let us revisit our example.  Let $\states'$ and $\actions'$ be the original state and action spaces, before enlargement to $\states$ and $\actions$.  Our original example began with relations $\succeq$ and $\succeq_\partial$ on $(\states',\actions')$, with the latter expressed by some $(r,V)$.  The corollary ensures that $r$ is aligned with $\succeq$ on $(\states',\actions')$, which is the same conclusion we drew from arguments specific to this example made before stating this general corollary.

This corollary justifies our treatment later in the paper of reward learning from choices determined by preferences that do not satisfy Axiom \ref{ax:alignment}.  For example, consider what happens when a human chooses whichever partial trajectory maximizes partial return with respect to some reward function $r$, which also expresses $\succeq$.  This is equivalent to bootstrapped return with a value function $V$ that assigns zero value to every state.  The preference relations violate Axiom \ref{ax:alignment}, but this can be addressed by expanding the state and action spaces and conjuring Corollary \ref{co:alignment} to ensure that a reward function learned from choices aligns with $\succeq$.

\section{Reward Learning}
\label{se:reward-learning}

In this section, we will discuss an approach to learning a reward function from choice data.  The point is for this reward function to express a human's preferences between infinite trajectory lotteries.  Such a reward function enables alignment of agent behavior with the human's preferences.  In particular, by increasing expected cumulative reward, the agent better serves the human's interests.

\subsection{Choice Data}

Our approach learns from choice data.  By this we mean a set $\mathcal{D}$ of triples $(\traj, \traj', y) \in \mathcal{D}$, each comprising two partial trajectories $\traj$ and $\traj'$ and a choice $y \in \{0,1\}$.  A choice of $\traj$ is indicated by $y=1$, while a choice of $\traj'$ is indicated by $y=0$.

We interpret this dataset as a set of queries, each consisting of a pair of partial trajectories, presented to a human, who then chooses between the two.  A reward learning algorithm takes this data as input and produces a reward function as output.

\subsection{Deterministic Choice}

In order to learn from choice data, we need to interpret how the human makes choices.  In the simplest, perhaps idealized, case, we could assume that each choice $y$ perfectly reflects the humans preference between partial trajectories.  In particular, $y = 1$ if $\traj \succ_\partial \traj'$ and $y = 0$ if $\traj' \succ_\partial \traj$.  Otherwise, if the human is indifferent, the choice $y$ is randomly sampled, taking value $0$ or $1$ with equal probability.

The set $\mathcal{H} \times \mathcal{H}$ of partial trajectory pairs is countably infinite.  Suppose that, as $\mathcal{D}$ is made up of random partial trajectory pairs and choices made in response, sampled such that each pair eventually appears infinitely often.  Then, as $\mathcal{D}$ grows, the data identify the preference relation $\succeq_\partial$.  By Theorem \ref{th:alignment}.1, there exists a pair $(r,V)$ that expresses $\succeq_\partial$, and thus, $\mathcal{D}$ identifies at least one such pair $(r,V)$.  By Theorem \ref{th:alignment}.2, $r$ then expresses $\succeq$.  In this way, from $\mathcal{D}$ it is possible to learn a reward function that enables alignment.

\subsection{Stochastic Choice}

In practice, it is unrealistic to expect a human to consistently make choices that perfectly match their preferences.  For instance, if the human is almost indifferent between two trajectories, their choices may vary if presented with the same two trajectories at different points in time.

Assuming choices to be stochastic is more realistic.  To model stochastic choice over partial trajectories, we use a choice probability function $p: \trajectories \times \trajectories \to [0,1]$.  We interpret $p(h,h')$ as the probability with which the human chooses $h$ over $h'$ when presented with these two partial trajectories.  Each choice is sampled independently according to this probability.  Note that $p(h,h') + p(h',h) = 1$. 

For choices to convey preferences, they need to reflect those preferences more often than not.  Perhaps the weakest assumption of this sort that enables learning preferences from choices is that, for all partial trajectories $h,h' \in \trajectories$, 
$$p(h,h') \geq 0.5 \qquad \mathrm{if\ and\ only\ if\ } \qquad h \succeq_\partial h'.$$
Deterministic choice is a special case in which $p(h,h') = 1$ if $h \succeq_\partial h'$.  Under this assumption, the preference relation $\succeq$ can again be learned from data generated by any deterministic or stochastic choice model.  In particular, suppose that, as $\mathcal{D}$ grows, choices between any pair $h$ and $h'$ can be averaged to identify $p(h,h')$ and thus the human's preference between $h$ and $h'$.  By identifying preferences between all pairs, we can again obtain a reward function that identifies $\succeq$ and thus enables alignment between agent and human.

One well versed in choice theory may wonder whether stronger assumptions on the stochastic relation between preferences and choice ought to be required to identify $\succeq$.  For example, it is common to require that choices not only between outcomes but also between lotteries be observed.  This is not needed in our context because the random mixing induced by lotteries can instead be induced by temporal mixing across an infinite trajectory.  That is, an infinite trajectory effectively serves as a lottery in which probabilities are determined by relative state frequencies.  While we do not consider requiring the human to compare infinite trajectories, long partial trajectories can approximate the effect arbitrarily well.

Stronger assumptions about the stochastic choice model can ensure identifiability even with choices between partial trajectories that are short.  For example, this is the case with the logit model considered in the next section.

\subsection{The Logit Choice Model}

To reduce the amount of data and computation required for reward learning, it is common to assume that $p$ resides in a particular function class.  We refer to such a function class as a {\it stochastic choice model}.  The most common is the logit choice model. In the logit choice model, each partial trajectory in $\mathcal{H}$ is assigned a numerical score.  In our case, we will take this score to be the bootstrapped return for some reward function $r$ and value function $V$. Then, the choice probability for partial trajectories $h$ and $h'$ is given by
$$p(h,h')  = \sigma \left(\sum_{t=0}^{T-1} r(s_t,a_t) + V(s_T) - \sum_{t=0}^{T'-1} r(s'_t,a'_t) - V(s'_{T'}) \right),$$
where $(s_0,a_0,\ldots,s_T) = h$, $(s'_0,a'_0,\ldots,s'_{T'}) = h'$, and $\sigma: \mathbb{R} \to (0,1)$ is the standard logistic function.  This is sometimes referred to as the Bradley-Terry model for binary choice with unit temperature.

Data and computation requirements can be reduced further by constraining the the functions $r$ and $V$.  In particular, these we can take these functions to be parameterized by a vector $\theta \in \Re^K$.  For example, $r_\theta(s,a)$ and $V_\theta(s)$ could be outputs of a neural network architecture with tunable parameters $\theta$ that takes $s$ and $a$ as inputs.  In this case, the choice probability model can itself be viewed as a neural network that takes partial trajectories $h$ and $h'$ as inputs and produces an output
$$\hat{p}_\theta(h,h')  = \sigma \left(\sum_{t=0}^{T-1} r_\theta(s_t,a_t) + V_\theta(s_T) - \sum_{t=0}^{T'-1} r_\theta(s'_t,a'_t) - V_\theta(s'_{T'}) \right).$$
Parameters can then be computed by minimizing the negative log-likelihood
\begin{align}
\label{eq:loss-function}
\ell(\theta| \mathcal{D}) = - \sum_{(h,h',y) \in \mathcal{D}} (y \log \hat{p}_\theta(h,h') + (1-y) \log \hat{p}_\theta(h',h)).
\end{align}
This loss can be approximately minimized, for example, via stochastic gradient descent.

A special case of this model uses a {\it tabular representation} of reward and value functions.  Each parameter encodes either a reward $r_\theta(s,a)$ assigned to a state-action pair $(s,a)$ or a value $V_\theta(s)$ assigned to a state $s$.  Hence, there are $|\states \times \actions| + |\states|$ parameters.  In other words, $\theta \in \Re^{|\states \times \actions| + |\states|}$.  While the resulting stochastic choice model $\hat{p}_\theta$ is practical only when the number of states is small enough so that $\theta$ can be stored and updated with reasonable memory and computation, it serves as a useful didactic example.

\subsection{Computational Example}

We will now study reward learning via a logit choice model based on bootstrapped return with a tabular representation of reward and value functions.  Our aim is to generate insights that extend beyond tabular representations.  In particular, our study demonstrates that:
\begin{enumerate}
\item A reward function that aligns with human preferences can be recovered from choice data even if the human makes choices based on erroneous beliefs about the environment.  In particular, we recover the same reward function from two humans who share goals but make different choices due to differing beliefs.
\item When using the logit choice model, practical learning algorithms infer the reward function, with accuracy diminishing quickly as choice data accumulates.  This is true even if partial trajectories in each pair are short.  And even if partial trajectories in each pair terminate at different states.  But learning can be accelerated by selecting each pair to share a common terminating state.
\end{enumerate}

We consider two humans who are trying to communicate their goals to a robot.  The two humans have a common goal, but have very different beliefs about how the robot can best achieve this goal. Because the two humans have different beliefs, they will make different choices when asked to compare partial trajectories.

\begin{figure}[htbp]
\centering
\begin{subfigure}[c]{.32\textwidth}
  \centering
  \vspace{-0.11in}
  \includegraphics[width=1.0\textwidth]{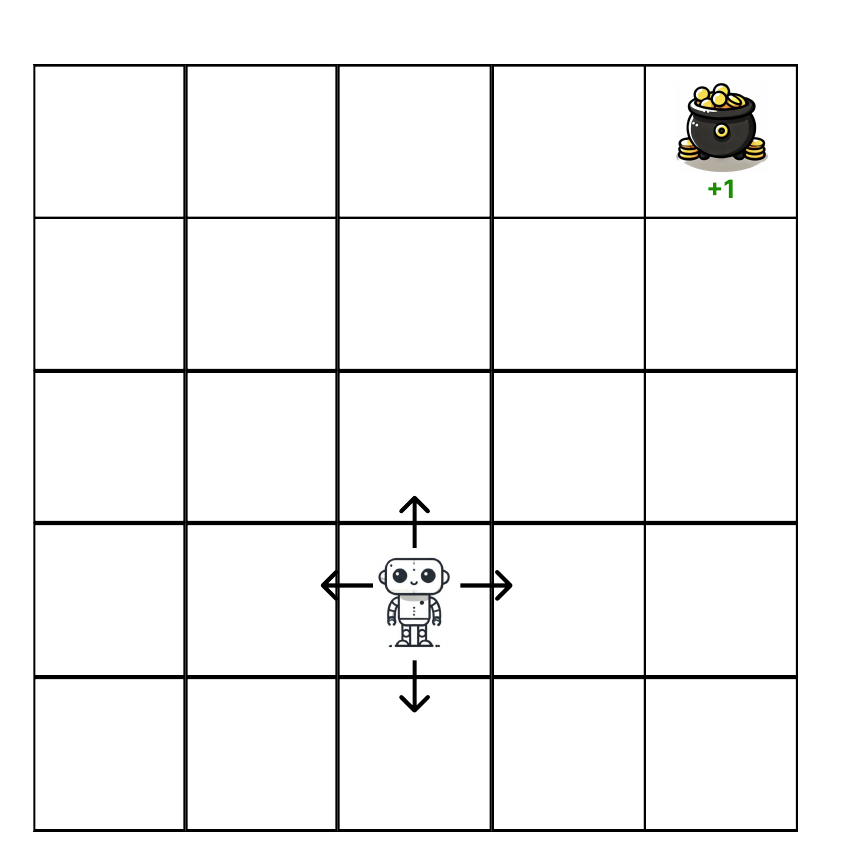}
  \caption{}
\end{subfigure}
\begin{subfigure}[c]{.32\textwidth}
  \centering
  \includegraphics[width=1.0\textwidth]{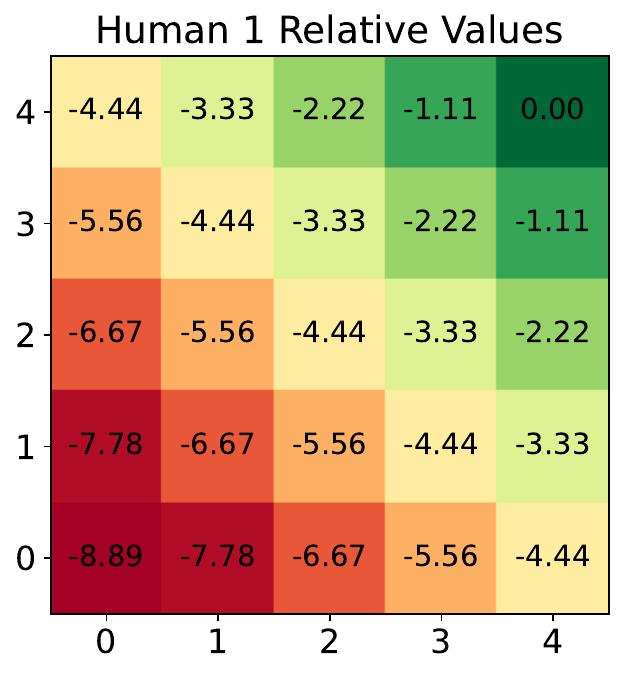}
  \caption{}
\end{subfigure}
\begin{subfigure}[c]{.32\textwidth}
  \centering
  \includegraphics[width=1.0\textwidth]{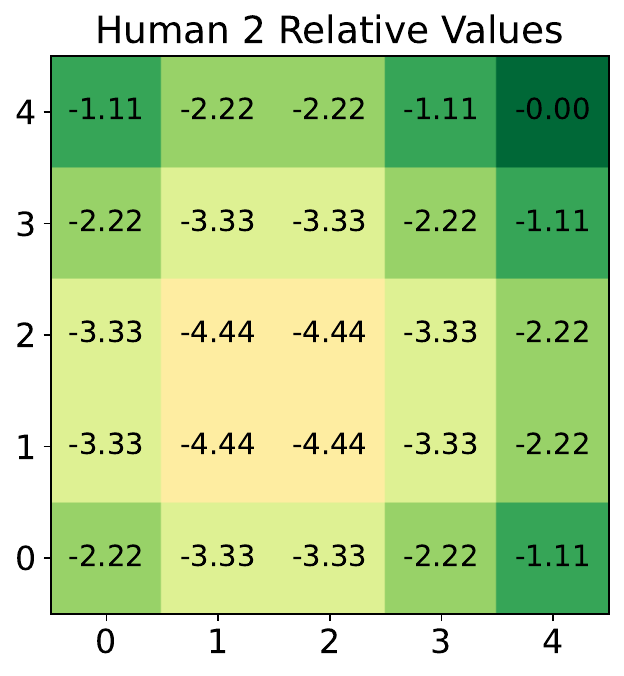}
  \caption{}
\end{subfigure}    
    \caption{\textbf{(a)} A grid environment in which the robot can move up, down, left and right, or stay put. The two humans have a common goal of the robot reaching the top right corner. \textbf{(b)} A visualization of the first human's relative value function. This human does not believe the robot can move through walls. \textbf{(c)} A visualization of the second human's relative value function. This human thinks the environment ``wraps around'' such that it's possible to move through a wall and come out the other end. That is why high value is assigned at corners.}
    \label{fig:grid-and-two-relative-value-functions}
\end{figure}

The robot acts on a two-dimensional grid with coordinates ranging from $(0,0)$ to $(4,4)$, as illustrated in Figure \ref{fig:grid-and-two-relative-value-functions}.  Both humans wish for the robot to go the top right corner, which we call the goal state, and then stay there. Specifically, each human's preference relation over lotteries is expressed by a reward function $r: \mathcal{S} \times \mathcal{A} \to \Re$, which assigns reward $1$ to the goal state and $0$ everywhere else. For simplicity, we assume that the human's reward function does not depend on the action, but only on state. This is equivalent to assuming that $|\mathcal{A}|=1$. 

\textbf{Differing beliefs.}
The two humans share the following beliefs about the dynamics of the environment. Both humans know that the robot can only move in one of the four directions, or stay put. The humans also agree that at each timestep the robot progresses towards the goal, in the sense of reducing the minimal number of steps to the goal, with $90 \%$ probability. With $10 \%$ probability, the agent stays put. If there are multiple next states that progress toward the goal, these are assigned equal probability. Once the robot reaches the goal, it stays there forever. 

There is one important difference between the two humans, however. While the first human does not believe the robot can move through walls, the second human does. In particular, the second human believes that the environment ``wraps around'' in the sense that the robot can move through a wall and emerge at the other end. As a result, the second human will tend to place a high value on corner states, since in the mind of this human, there is always a very short path from a corner to the goal state. This difference in belief results in two very different relative value functions. See a visualization in Figure \ref{fig:grid-and-two-relative-value-functions}b and \ref{fig:grid-and-two-relative-value-functions}c. This difference in relative value function will, in turn, result in different choices over partial trajectories. 

\textbf{Choice Data.} To communicate their goals to the robot, each human provides a choice for each pair of trajectories presented to them. Each trajectory in a pair is of length $3$ and is sampled in the following way: The starting state is sampled uniformly across the $5 \times 5$ states. The second state is sampled uniformly across the contiguous states: the state above, below, to the left, and to the right. If the robot is by a wall, and samples a state outside the grid, the robot stays put. Each subsequent state is sampled in the same manner. Thus, two consecutive states in a trajectory will always be contiguous in the grid.

Each human makes choices according to the logit choice model with the bootstrapped return as the score. First, the human scores the two partial trajectories using its reward function and relative value function. Then, the choice is sampled using the logit choice model.

\textbf{Estimating $r$}. 
We estimate $r$ and $V$ using SGD with the loss function in Equation \ref{eq:loss-function}. To evaluate how good the estimate of the reward function is, we scale and shift the learned reward functions such that $0$ is the lowest value, and $1$ is the highest. We then compute the root mean squared error (RMSE).

\textbf{Experiments.}
We perform a series of experiments examining how quickly the common reward function can be recovered, under different conditions.

First, we ask: How quickly can the reward function be recovered when trajectories are short? To study this question, we investigate how the RMSE varies as the number of data points increases. We do so for each of the two choice datasets generated by the two humans, respectively. For this experiment, to make the learning problem more challenging, we remove all trajectories that terminate in the same state. We plot the results in Figure \ref{fig:rmse_two_plots}a. We find that we need roughly $15,000$ samples to achieve RMSE of $0.1$. We also find that for both choice datasets, we recover the common reward function despite the two datasets being based on different beliefs of the two humans.

While the reward function can, in principle, be recovered regardless of the human's relative value function, it is plausible that the choice of value function affects the speed of learning. In this next study, we investigate this hypothesis. In particular, we ask: How does the magnitude of the relative value function affect the speed of learning? To study this question, we take the first human's relative value function and scale it by a factor $c > 1$. Based on this new relative value function, we regenerate the choice data and estimate the reward function. We redo this procedure for each scaling factor $c \in \{2,4,8\}$ and plot the results in Figure \ref{fig:rmse_two_plots}b. We find that learning is significantly slowed down as the magnitude of the relative value function is increased.

The negative impact of the value function on learning speed raises the question of whether this impact can be somehow reduced. The answer is yes. We can remove the impact of the value estimate by having the human only choose between trajectories that terminate at the same state. In this case, the value estimates cancel out and therefore do not influence the human's choice. This phenomenon has previously been recognized by \citet{knox2024models}. Because value estimates cancel out, it seems plausible that the reward function can be recovered more quickly by having the human provide choices between trajectories that terminate at the same state.  We investigate this hypothesis by fixing the end state across all trajectories. We plot the results in Figure $\ref{fig:rmse_two_plots}$b. We find that this speeds up learning. This benefit becomes especially pronounced as the scale of the relative value function increases.

\begin{figure}[htbp]
    \includegraphics[width=0.49\textwidth]{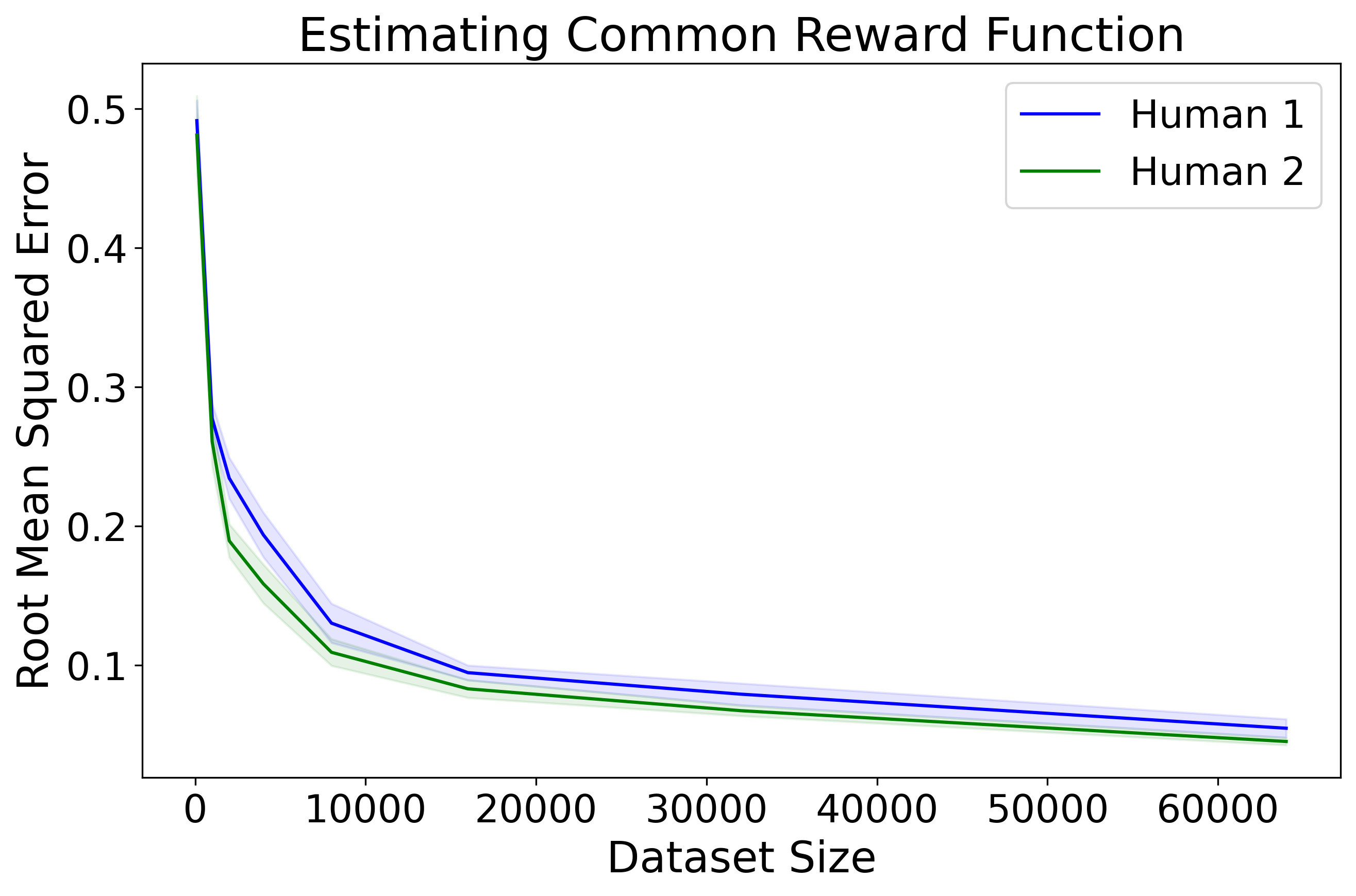}
    \hfill
    \includegraphics[width=0.49\textwidth]{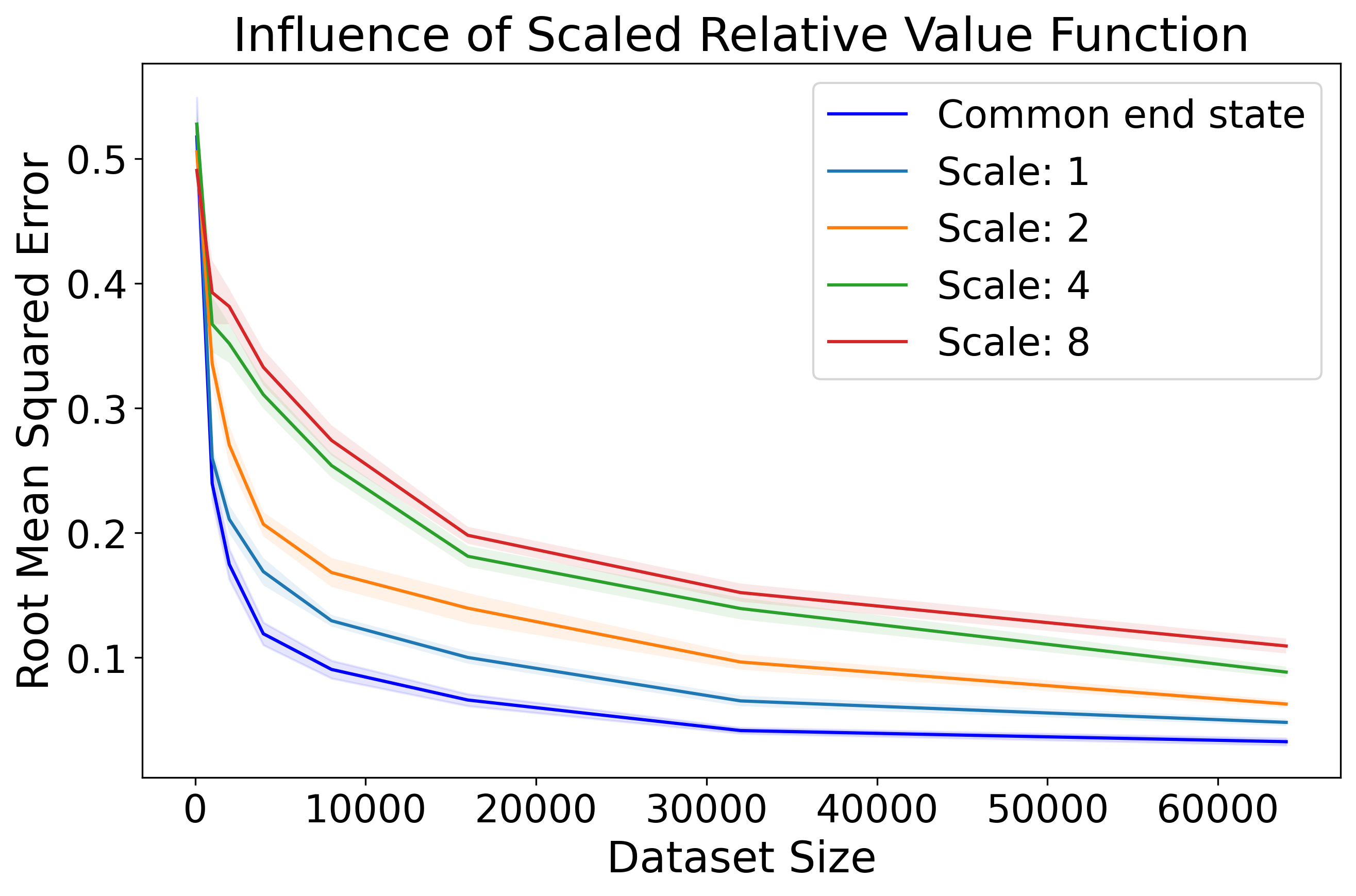}
    
    \begin{minipage}{0.49\textwidth}
        \centering
        (a)
    \end{minipage}
    \hfill
    \begin{minipage}{0.49\textwidth}
        \centering
        (b)
    \end{minipage}

    \caption{\textbf{(a)} Reward function estimation error across dataset sizes. As the dataset size grows, the common reward function is recovered across the two datasets, generated by Human $1$ and $2$, respectively. \textbf{(b)} Reward function estimation error as we vary the scale of the relative value function. Multiplying the relative value function by a larger value leads to slower learning. The benefits of comparing trajectories that terminate in the same state becomes especially pronounced in this case.}
    \label{fig:rmse_two_plots}
\end{figure}

\section{Robustness of Bootstrapped Return}

In this section, we discuss benefits to robustness of learning via the bootstrapped return rather than the partial return and cumulative advantage models.  These benefits are summarized in Table \ref{tab:model-robustness}.  The following sections will elaborate on and justify claims made in this table.

\begin{table}
\centering
\begin{tabular}{|c|c|c|c|c|}
\hline
\multicolumn{2}{|c|}{\multirow{2}{4em}{alignment}} & \multicolumn{3}{|c|}{learning model} \\
\cline{3-5}
\multicolumn{2}{|c|}{} & \cellcolor{red!50} partial return & \cellcolor{red!50} cumulative advantage & \cellcolor{green!50} bootstrapped return \\
\hline
\multirow{3}{*}{choice model} & partial return & \cellcolor{red!50} yes & \cellcolor{red!50} no & \cellcolor{green!50} yes \\
\cline{2-5}
& cumulative advantage & \cellcolor{red!50} yes/no & \cellcolor{red!50} yes/no & \cellcolor{green!50} yes/no \\
\cline{2-5}
& bootstrapped return & \cellcolor{red!50} no & \cellcolor{red!50} no & \cellcolor{green!50} yes \\
\hline
\end{tabular}
\caption{When learning a model from infinite choice data, will reward align human preferences?  Effective learning from choices based on cumulative advantage requires transition probabilities.  To accommodate that, for the middle row, we assume the agent knows correct ones.  Answers indicate whether alignment is attained based on choices made by a human with correct/incorrect transition probabilities.  When the answer does not vary with correctness/incorrectness, a single answer is provided.  The shading signifies robustness of learning via bootstrapped return relative to alternatives.}
\label{tab:model-robustness}
\end{table}

\subsection{Choice Based on Partial Return}
\label{sec:robustness-data-partial-return}

We will now assume that choices are based on partial return. This corresponds to the first row in Table \ref{tab:model-robustness}. We first note that if the agent knows that choices are made based on partial return, then under suitable conditions, the agent can clearly recover $\succeq$.

We now ask: if the agent assumes that choices are based upon  bootstrapped return, will it still recover a reward function that expresses $\succeq$?

Fortunately, the answer is yes. To see this, note that the bootstrapped return $\sum_{t=0}^{T-1} r(s_t, a_{t+1}) + V(s_T)$ simplifies to the partial return $\sum_{t=0}^{T-1} r(s_t, a_{t+1})$ if we set $V(s) = 0$ for all $s \in \mathcal{S}$. Thus, the bootstrapped return choice model is a strict generalization of the partial return model. Therefore, if the human makes choices based on partial return, a learning model based on bootstrapped return will recover a reward function that expresses $\succeq$.

What if the agent instead uses a learning model based on cumulative advantage? In that case, it is not guaranteed that the agent recovers a reward function that is aligned with the human's preferences. To see this, consider an MDP with two states $\mathrm{good}$ and $\mathrm{bad}$, and two actions $\mathrm{stay}$ and $\mathrm{move}$, as illustrated in Figure \ref{fig:GoodBadMDP}. The action Stay always keeps the agent in the same state and the action Move always takes the agent to the state that the agent is not in. Let the reward function be specified as in the table of Figure \ref{fig:GoodBadMDP}.

\begin{figure}[H]
\centering
\begin{subfigure}[c]{.35\textwidth}
  \centering
  \includegraphics[width=1.0\textwidth]{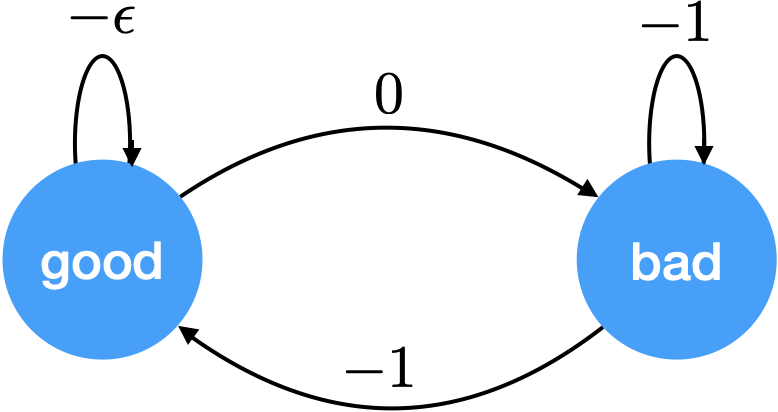}
\end{subfigure}
\hspace{0.5in}
\begin{subfigure}[c]{.35\textwidth}
\centering
\begin{tabular}{|c|c|c|}
\hline
$s$ & $a$ & $r(s,a)$ \\
\hline
$\mathrm{good}$ & $\mathrm{stay}$ & $-\epsilon$ \\
$\mathrm{good}$ & $\mathrm{move}$ & $0$ \\
$\mathrm{bad}$ & $\mathrm{stay}$ & $-1$ \\
$\mathrm{bad}$ & $\mathrm{move}$ & $-1$ \\
\hline
\end{tabular}
\end{subfigure}
\caption{An MDP for which learning via a cumulative advantage model from choices based on partial return produces a misaligned reward function.}
\label{fig:GoodBadMDP}
\end{figure}

We first note that the agent is misspecified in the following sense: the only way to estimate the choice probabilities perfectly would be to have $\Delta_* = r$. But $r$ is not an admissable advantage function, since in each state at least one action needs to have zero advantage. Therefore, we will consider an agent that is estimating $\Delta_*$ by searching amongst admissable advantage functions. 

Let us assume that the agent is trying to maximize likelihood. Further assume that the human is only showed trajectories of length $1$ sampled uniformly and that choices are sampled based on the logit choice model. Let $\hat{\Delta}$ be the estimated advantage. Since $r(\mathrm{good},\mathrm{stay}) < r(\mathrm{good},\mathrm{move})$, and the agent is maximizing likelihood it must be that $\hat{\Delta}(\mathrm{good},\mathrm{stay}) < \hat{\Delta}(\mathrm{good},\mathrm{move})$. Therefore, an agent that then treats $\hat{\Delta}$ as the advantage will then take action Move in state Good. This is clearly suboptimal since it takes the agent to state Bad where it will receive a large negative reward regardless of action. Thus, if choices are based on partial return, but the agent assumes choices as based on cumulative advantage, the agent may act according preferences unaligned with the human's.

\subsection{Choice Based on Cumulative Advantage}

We now turn attention to choices based on cumulative advantage.  This corresponds to the second row of Table \ref{tab:model-robustness}.  We separately treat cases where the human computes advantages using correct and incorrect transition probabilities.  In all cases, we assume that agent knows correct transition probabilities.

\subsubsection{Preliminaries}

We first introduce some notation and basic mathematical results that will facilitate our discussion.  For any state $s$, action $a$, policy $\pi$, transition matrix $P$, reward function $r$, and discount factor $\gamma \in (0,1)$, value functions are defined by
$$V_{\pi,P,r,\gamma}(s) = \E_{\pi,P}\left[\sum_{t=0}^\infty \gamma^t r(S_t,A_t) \Big| s\right]
\qquad \text{and} \qquad Q_{\pi,P,r,\gamma}(s,a) = r(s,a) + \gamma \sum_{s' \in \states} P_{ass'} V_{\pi,P,r,\gamma}(s'),$$
and the corresponding advantage function is
$\Delta_{\pi,P,r,\gamma}(s) = Q_{\pi,P,r,\gamma}(s,a) - V_{\pi,P,r,\gamma}(s)$.
As the discount factor approaches one, there is a well-defined limit, which we can then maximize over policies:
$$\Delta_{\pi,P,r}(s) = \lim_{\gamma \uparrow 1} \Delta_{\pi,P,r,\gamma}(s) \qquad \text{and} \qquad \Delta_{*,P,r}(s) = \max_\pi \Delta_{\pi,P,r}(s).$$
In Section \ref{se:introduction}, we denoted the optimal advantage function by $\Delta_*$.  The additional subscripts we introduce for this section make explicit the dependence on $P$ and $r$.

We define a sense in which a reward function $r$ expresses preferences between policies for a fixed transition matrix $P$.
\begin{definition}
\label{def:express-infinite-P}
{\bf (expressing $\succeq$ for $P$)}
A function $r:\states\times\actions\rightarrow \Re$ is said to express $\succeq$ on $P$ if, for all $\pi$ and $\pi'$ such that $(\pi,P) \in \mathcal{P}$ and $(\pi',P) \in \mathcal{P}$,
$$\ell \succeq \ell' \qquad \text{if and only if} \qquad \lim_{\gamma \uparrow 1} (V_{\pi, r, \gamma}(s) - V_{\pi', r, \gamma}(s)) \geq 0,$$
for all $s \in \states$, where $\ell,\ell' \in \mathcal{L}_\infty$ are lotteries induced by $(s,\pi,P)$ and $(s,\pi',P)$, respectively.
\end{definition}
If a learning algorithm recovers a reward function $\tilde{r}$ that expresses $\succeq$ on $P$, where $P$ is the correct transition matrix, then its use in comparing policies will be consistent with $\succeq$.  Note that the requirement that $(\pi,P)$ and $(\pi', P)$ are in $\mathcal{P}$ ensures that the preference between policies is consistent across all states $s$.

As \citet{knox2024learning} argue, the advantage function can sometimes substitute for the reward function without altering an agent's goals.  The following result, which is similar to results established by \citet{knox2024learning}, supports this notion.
\begin{restatable}[]{theorem}{advantage}
\label{th:advantage}
{\bf (advantages as rewards)} For all $r$, $\pi$, $\overline{\pi}$, and $\gamma \in (0,1)$,
$$V_{\pi, P, r,\gamma}(s) - V_{\pi, P, \tilde{r}, \gamma}(s) =  V_{\overline{\pi}, P, r, \gamma}(s),$$
where $\tilde{r} = \Delta_{\overline{\pi}, P, r, \gamma}$.
\end{restatable}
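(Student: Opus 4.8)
The plan is to expand both sides in terms of the underlying value functions and reduce everything to a telescoping identity between the Bellman-type recursions for $r$ and $\tilde r = \Delta_{\overline{\pi},P,r,\gamma}$. First I would write out the claimed identity: with $\tilde r(s) = \Delta_{\overline{\pi},P,r,\gamma}(s) = Q_{\overline{\pi},P,r,\gamma}(s,a) - V_{\overline{\pi},P,r,\gamma}(s)$, we want
$$V_{\pi,P,r,\gamma}(s) - V_{\pi,P,\tilde r,\gamma}(s) = V_{\overline{\pi},P,r,\gamma}(s).$$
Since the map $r \mapsto V_{\pi,P,r,\gamma}$ is linear in $r$, the left-hand side equals $V_{\pi,P,\,r-\tilde r,\,\gamma}(s)$. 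So it suffices to show that $V_{\pi,P,\,r-\tilde r,\,\gamma} = V_{\overline{\pi},P,r,\gamma}$; in other words, that under reward $r-\tilde r$ the policy $\pi$ gets exactly the $r$-value of $\overline{\pi}$, \emph{regardless} of what $\pi$ is.

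The key computational step is to identify $r - \tilde r$ explicitly. Write $W := V_{\overline{\pi},P,r,\gamma}$ for brevity. By definition of the advantage, for each $(s,a)$,
$$\tilde r(s) = Q_{\overline{\pi},P,r,\gamma}(s,a) - W(s) = r(s,a) + \gamma \sum_{s'} P_{ass'} W(s') - W(s),$$
so that
$$(r - \tilde r)(s,a) = W(s) - \gamma \sum_{s'} P_{ass'} W(s').$$
Now plug this into the Bellman equation for $V_{\pi,P,\,r-\tilde r,\,\gamma}$: for any policy $\pi$,
$$V_{\pi,P,\,r-\tilde r,\,\gamma}(s) = \sum_a \pi(a|s)\Big[(r-\tilde r)(s,a) + \gamma \sum_{s'} P_{ass'} V_{\pi,P,\,r-\tilde r,\,\gamma}(s')\Big].$$
I would then verify by direct substitution that $V = W$ solves this fixed-point equation: substituting $V_{\pi,P,\,r-\tilde r,\,\gamma} = W$ on the right gives $\sum_a \pi(a|s)\big[W(s) - \gamma\sum_{s'}P_{ass'}W(s') + \gamma\sum_{s'}P_{ass'}W(s')\big] = \sum_a \pi(a|s) W(s) = W(s)$, which is exactly the left side. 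Since for $\gamma \in (0,1)$ the Bellman operator is a contraction and hence has a unique fixed point, $V_{\pi,P,\,r-\tilde r,\,\gamma} = W = V_{\overline{\pi},P,r,\gamma}$, which is the claim.

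There is essentially no serious obstacle here — the statement is a clean algebraic identity and the only thing to be careful about is bookkeeping. The one point that deserves a word of care is the appeal to uniqueness of the fixed point (equivalently, that $(I - \gamma P_\pi)$ is invertible), which holds precisely because $\gamma < 1$; this is why the theorem is stated for $\gamma \in (0,1)$ rather than in the average-reward limit. An alternative, slightly more pedestrian route avoids the fixed-point argument entirely: expand $V_{\pi,P,\,r-\tilde r,\,\gamma}(s) = \E_{\pi,P}[\sum_{t\ge 0}\gamma^t (r-\tilde r)(S_t,A_t)\mid s]$, substitute $(r-\tilde r)(S_t,A_t) = W(S_t) - \gamma\,\E[W(S_{t+1})\mid S_t,A_t]$, and observe that the sum telescopes (using $\gamma^t W(S_t) - \gamma^{t+1} W(S_{t+1})$) leaving only $W(s)$, with the tail term $\gamma^{T+1}\E[W(S_{T+1})]$ vanishing as $T\to\infty$ since $|W|$ is bounded and $\gamma<1$. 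Either version is short; I would present the fixed-point version for brevity.
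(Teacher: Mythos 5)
Your proposal is correct, and your primary argument takes a genuinely different route from the paper's. The paper proves the identity by expanding $V_{\pi,P,\tilde r,\gamma}(s)$ as the discounted series $\E_{\pi,P}[\sum_{t\ge 0}\gamma^t \Delta_{\overline{\pi},P,r,\gamma}(S_t,A_t)\mid s]$, rewriting each advantage as $r(S_t,A_t)+\gamma V_{\overline{\pi},P,r,\gamma}(S_{t+1})-V_{\overline{\pi},P,r,\gamma}(S_t)$ and letting the correction terms telescope to $-V_{\overline{\pi},P,r,\gamma}(s)$ --- this is exactly the ``more pedestrian'' alternative you sketch in your last paragraph. Your main proof instead uses linearity of $r\mapsto V_{\pi,P,r,\gamma}$ to reduce the claim to $V_{\pi,P,\,r-\tilde r,\,\gamma}=V_{\overline{\pi},P,r,\gamma}$, computes $(r-\tilde r)(s,a)=W(s)-\gamma\sum_{s'}P_{ass'}W(s')$ with $W=V_{\overline{\pi},P,r,\gamma}$, and verifies that $W$ satisfies the Bellman fixed-point equation for $\pi$ under the shaped reward, concluding by uniqueness of the fixed point for $\gamma<1$. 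Both are sound; the fixed-point version makes explicit where $\gamma\in(0,1)$ is used (invertibility of $I-\gamma P_\pi$) and exposes the result as an instance of potential-based reward shaping, while the paper's telescoping version is self-contained and needs only boundedness of $V_{\overline{\pi},P,r,\gamma}$ to kill the tail term (which, for full rigor, is the one step the paper leaves implicit and you state explicitly). The only cosmetic blemish is that you write $\tilde r(s)$ for a quantity that depends on $(s,a)$, but the paper's own definition of $\Delta_{\pi,P,r,\gamma}$ commits the same notational slip, and your subsequent computation treats it correctly as a function of the state-action pair.
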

The $V_{\overline{\pi}, P, r, \gamma}(s)$ on the right hand side does not change with $\pi$.  Hence, the difference $V_{\pi, P, r,\gamma}(s) - V_{\pi, P, \tilde{r}, \gamma}(s)$ is constant across $\pi$.  In other words, discounted values assessed by $r$ and $\tilde{r}$ are consistent.  In particular, for each starting state, the ordering of policies by value is the same..  The following corollary follows almost immediately.
\begin{restatable}[]{corollary}{advantageCo}
\label{co:advantage}
{\bf (advantages express $\succeq$ on $P$)}  If $r$ expresses $\succeq$ then, for all $\pi$ and $P$, $\Delta_{\pi, P, r}$ expresses $\succeq$ on $P$.
\end{restatable}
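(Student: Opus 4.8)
The plan is to chain together Theorem \ref{th:advantage}, Definition \ref{def:express-infinite-P}, and the hypothesis that $r$ expresses $\succeq$, taking the limit $\gamma \uparrow 1$ at the end. First I would fix an arbitrary policy $\overline{\pi}$ and transition matrix $P$, and set $\tilde{r}_\gamma = \Delta_{\overline{\pi}, P, r, \gamma}$ for each $\gamma \in (0,1)$. Theorem \ref{th:advantage} tells us that for every policy $\pi$ and every state $s$, $V_{\pi,P,r,\gamma}(s) - V_{\pi,P,\tilde{r}_\gamma,\gamma}(s) = V_{\overline{\pi},P,r,\gamma}(s)$, and the key observation is that the right-hand side does not depend on $\pi$. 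Hence for any two policies $\pi,\pi'$ with $(\pi,P),(\pi',P)\in\mathcal{P}$, subtracting the two instances of this identity gives $V_{\pi,P,\tilde{r}_\gamma,\gamma}(s) - V_{\pi',P,\tilde{r}_\gamma,\gamma}(s) = V_{\pi,P,r,\gamma}(s) - V_{\pi',P,r,\gamma}(s)$ for all $s$. So the value \emph{differences} induced by $\tilde{r}_\gamma$ and by $r$ coincide exactly, at every discount factor.

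Next I would take $\gamma \uparrow 1$ on both sides. On the right, $\lim_{\gamma\uparrow 1}(V_{\pi,P,r,\gamma}(s) - V_{\pi',P,r,\gamma}(s)) \geq 0$ iff $\ell \succeq \ell'$, where $\ell,\ell'$ are the lotteries induced by $(s,\pi,P)$ and $(s,\pi',P)$ — this is exactly the statement that $r$ expresses $\succeq$ on $P$, which follows from the hypothesis that $r$ expresses $\succeq$ (together with the observation that expressing $\succeq$ in the sense of Definition \ref{def:express-infinite} implies expressing $\succeq$ on any $P$, since the latter just restricts attention to lotteries of the form $\mathbb{P}_{\pi,P}(\cdot|s)$). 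Since the differences agree for every $\gamma$, the left-hand limit $\lim_{\gamma\uparrow 1}(V_{\pi,P,\tilde{r}_\gamma,\gamma}(s) - V_{\pi',P,\tilde{r}_\gamma,\gamma}(s))$ equals the right-hand limit, so it is $\geq 0$ iff $\ell \succeq \ell'$ as well.

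The remaining gap — and the step I expect to be the main obstacle — is reconciling the $\gamma$-dependent reward $\tilde{r}_\gamma = \Delta_{\overline{\pi},P,r,\gamma}$ with the $\gamma$-independent object $\Delta_{\overline{\pi},P,r} = \lim_{\gamma\uparrow 1}\Delta_{\overline{\pi},P,r,\gamma}$ that appears in the corollary's statement. I would argue that because $(\pi,P)\in\mathcal{P}$ guarantees a unique stationary distribution, the value differences $V_{\pi,P,\tilde r,\gamma}(s) - V_{\pi',P,\tilde r,\gamma}(s)$ depend continuously (and with a well-defined limit) on both $\gamma$ and the reward argument $\tilde r$ as $\gamma\uparrow 1$; combined with $\tilde{r}_\gamma \to \Delta_{\overline{\pi},P,r}$, this lets me replace $\tilde{r}_\gamma$ by its limit $\Delta_{\overline{\pi},P,r}$ without changing the sign of the limiting difference. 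Equivalently, one can invoke Theorem \ref{th:advantage} in its limiting form directly (the identity passes to the limit since all three terms converge), concluding $\lim_{\gamma\uparrow 1}(V_{\pi,P,r,\gamma}(s) - V_{\pi,P,\Delta_{\overline{\pi},P,r},\gamma}(s)) = V_{\overline{\pi},P,r}(s)$, again independent of $\pi$, and then finishing exactly as above. Either route reduces the corollary to Theorem \ref{th:advantage} plus the definition of expressing $\succeq$ on $P$.
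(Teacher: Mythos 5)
Your first two paragraphs follow exactly the route the paper intends (it is the same chain of identities that appears in the paper's proof of Corollary \ref{co:advantage-matching}): apply Theorem \ref{th:advantage} with $\tilde{r}_\gamma = \Delta_{\overline{\pi},P,r,\gamma}$, observe that the right-hand side is independent of $\pi$, subtract to equate value differences, and send $\gamma \uparrow 1$ using the hypothesis that $r$ expresses $\succeq$. Those steps are fine, and you have correctly isolated the one nontrivial step: replacing the $\gamma$-dependent reward $\Delta_{\overline{\pi},P,r,\gamma}$ by the fixed reward $\Delta_{\overline{\pi},P,r}$ that actually appears in the corollary.

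The continuity argument you propose for that replacement does not go through, and this is a genuine gap, not a formality. The map $\tilde{r} \mapsto V_{\pi,P,\tilde{r},\gamma}(s)$ is linear with operator norm $1/(1-\gamma)$ in the sup norm, while $\|\Delta_{\overline{\pi},P,r,\gamma} - \Delta_{\overline{\pi},P,r}\|_\infty = O(1-\gamma)$; the product is $O(1)$ and in general neither vanishes nor cancels between two policies $\pi,\pi'$ unless they share a stationary distribution. The residual works out to the difference between the stationary expectations, under $\pi$ and under $\pi'$, of the bias function of $\overline{\pi}$. Your fallback (``the identity passes to the limit since all three terms converge'') also fails, since $V_{\overline{\pi},P,r,\gamma}(s)$ itself typically diverges like $(\text{average reward})/(1-\gamma)$. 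A concrete instance: two states, actions $\mathrm{stay}$ and $\mathrm{switch}$ (with the obvious deterministic transitions), $r \equiv 0$ except $r(2,\mathrm{switch}) = 1$, and $\overline{\pi}$ staying at state $1$ and switching at state $2$. Then $\Delta_{\overline{\pi},P,r,\gamma}(2,\mathrm{stay}) = \gamma - 1 \to 0$, but a policy parked at $(2,\mathrm{stay})$ accumulates that $O(1-\gamma)$ reward over $1/(1-\gamma)$ discounted steps, and one computes $\Delta_{\overline{\pi},P,r}(1,\mathrm{switch}) = 1$. Consequently the limiting advantage, used as a reward, strictly prefers ``switch, then park at $2$'' over ``park at $1$'' from state $1$, whereas $r$ is indifferent between these two policies (both are in $\mathcal{P}$). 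So the step cannot be closed by continuity: either the argument must be carried out with $\Delta_{\overline{\pi},P,r,\gamma}$ throughout, or an additional hypothesis is needed (e.g., comparing only policies with a common stationary distribution, or $\overline{\pi}$ having constant bias). Note that the paper elides this same step — its proof of Corollary \ref{co:advantage-matching} invokes Theorem \ref{th:advantage} with the limiting advantage even though the theorem is stated for the $\gamma$-discounted one — so the obstacle you flagged is real and is not resolved in the paper either.
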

A similar result holds for any reward function $\tilde{r}$ that induces the same advantage function as $r$.
\begin{restatable}[]{corollary}{advantageMatching}
\label{co:advantage-matching}
{\bf (imputed rewards express $\succeq$ on $P$)}  
For all $P$, $\overline{\pi}$, and $r$ that expresses $\succeq$, any $\tilde{r}$ for which $\Delta_{\overline{\pi}, P, r} = \Delta_{\overline{\pi}, P, \tilde{r}}$ expresses $\succeq$ on $P$. 
\end{restatable}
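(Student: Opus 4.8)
The plan is to sandwich $\tilde r$ between $r$ and their common $\overline\pi$-advantage, invoking Corollary~\ref{co:advantage} in one direction and a converse of it, read off from Theorem~\ref{th:advantage}, in the other. Since $r$ expresses $\succeq$, Corollary~\ref{co:advantage} (with the policy there taken to be $\overline\pi$) gives that $\Delta_{\overline\pi,P,r}$ expresses $\succeq$ on $P$; the hypothesis $\Delta_{\overline\pi,P,r}=\Delta_{\overline\pi,P,\tilde r}$ then upgrades this to: $\Delta_{\overline\pi,P,\tilde r}$ expresses $\succeq$ on $P$. What remains is a converse of Corollary~\ref{co:advantage}: for any reward function $\bar r$, if $\Delta_{\overline\pi,P,\bar r}$ expresses $\succeq$ on $P$, then so does $\bar r$. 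Applying this with $\bar r=\tilde r$ finishes the proof.

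To establish the converse, I would fix policies $\pi,\pi'$ with $(\pi,P),(\pi',P)\in\mathcal{P}$ and a state $s$, and let $\ell,\ell'$ be the lotteries induced by $(s,\pi,P)$ and $(s,\pi',P)$. Theorem~\ref{th:advantage} applied with reward $\bar r$ and reference policy $\overline\pi$ gives, for every $\gamma\in(0,1)$, that $V_{\pi,P,\bar r,\gamma}(s)-V_{\pi,P,\Delta_{\overline\pi,P,\bar r,\gamma},\gamma}(s)=V_{\overline\pi,P,\bar r,\gamma}(s)$, whose right-hand side does not depend on $\pi$; subtracting the corresponding identity for $\pi'$ yields
$$V_{\pi,P,\bar r,\gamma}(s)-V_{\pi',P,\bar r,\gamma}(s)=V_{\pi,P,\Delta_{\overline\pi,P,\bar r,\gamma},\gamma}(s)-V_{\pi',P,\Delta_{\overline\pi,P,\bar r,\gamma},\gamma}(s).$$
Sending $\gamma\uparrow1$, the left side tends to the number whose sign, by Definition~\ref{def:express-infinite-P}, decides whether $\ell\succeq\ell'$ according to $\bar r$; after replacing $\Delta_{\overline\pi,P,\bar r,\gamma}$ by its limit $\Delta_{\overline\pi,P,\bar r}$, the right side tends to the number whose sign decides $\ell\succeq\ell'$ according to $\Delta_{\overline\pi,P,\bar r}$ --- and the latter expresses $\succeq$ on $P$ by assumption. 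Equating the two limiting signs shows $\bar r$ expresses $\succeq$ on $P$, since $\pi,\pi',s$ were arbitrary.

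The technical crux is the replacement of $\Delta_{\overline\pi,P,\bar r,\gamma}$ by $\Delta_{\overline\pi,P,\bar r}$ inside the $\gamma\uparrow1$ limit on the right-hand side above. The discounted advantage $\Delta_{\overline\pi,P,\bar r,\gamma}$ converges to $\Delta_{\overline\pi,P,\bar r}$ only at rate $O(1-\gamma)$ --- this is the Laurent expansion of discounted value around $\gamma=1$, available because every $(\pi,P)\in\mathcal{P}$ is unichain --- so a reward perturbation of that size shifts each $V_{\pi,P,\cdot,\gamma}(s)$ by only a bounded amount. One must check that this bounded shift has the same $\gamma\uparrow1$ limit for $\pi$ and for $\pi'$ in the one regime where it could matter: when $\pi$ and $\pi'$ have equal average reward under $\bar r$, so that the limiting value difference is finite and its sign is what decides the preference. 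When the average rewards differ, the value difference diverges to $\pm\infty$ with a sign that no bounded perturbation can flip, so there is nothing to check. This is the same limiting argument already needed to deduce Corollary~\ref{co:advantage} from Theorem~\ref{th:advantage}, and it can be invoked here rather than repeated.
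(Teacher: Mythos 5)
Your route is genuinely different from the paper's, and the difference is where the trouble lies. The paper never passes through Corollary~\ref{co:advantage}, and it needs no converse of it: it applies Theorem~\ref{th:advantage} twice with the same reference policy $\overline{\pi}$ --- once for $r$, once for $\tilde{r}$ --- identifies the two middle terms $V_{\pi,P,\Delta_{\overline{\pi},P,r},\gamma}(s)$ and $V_{\pi,P,\Delta_{\overline{\pi},P,\tilde{r}},\gamma}(s)$ via the hypothesis, and concludes that $V_{\pi,P,r,\gamma}(s)-V_{\pi,P,\tilde{r},\gamma}(s)=V_{\overline{\pi},P,r,\gamma}(s)-V_{\overline{\pi},P,\tilde{r},\gamma}(s)$ is independent of $\pi$. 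Subtracting the same identity for $\pi'$ yields the exact, per-$\gamma$ equality $V_{\pi,P,\tilde{r},\gamma}(s)-V_{\pi',P,\tilde{r},\gamma}(s)=V_{\pi,P,r,\gamma}(s)-V_{\pi',P,r,\gamma}(s)$, and the conclusion follows by letting $\gamma\uparrow1$. At no point does the paper need the statement ``the common advantage function expresses $\succeq$ on $P$,'' which is the pivot of your argument.

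The gap in your proof sits exactly at your self-identified ``technical crux,'' and it cannot be discharged by deferring to Corollary~\ref{co:advantage}: the paper asserts that corollary without proof, so there is no ``same limiting argument'' on record to invoke, and the needed fact is not true as you state it. Writing $V_{\overline{\pi},P,\bar{r},\gamma}(s)=\rho/(1-\gamma)+h(s)+o(1)$ for the average reward $\rho$ and bias $h$ of $(\overline{\pi},P)$ under $\bar{r}$, the limiting advantage is $\Delta_{\overline{\pi},P,\bar{r}}(s,a)=\bar{r}(s,a)-\rho+\sum_{s'}P_{ass'}h(s')-h(s)$; telescoping its transition term along a $\pi$-trajectory and applying Abel summation gives
$$V_{\pi,P,\Delta_{\overline{\pi},P,\bar{r}},\gamma}(s)-V_{\pi,P,\Delta_{\overline{\pi},P,\bar{r},\gamma},\gamma}(s)\;\longrightarrow\;\sum_{s'}\mu_\pi(s')\,h(s'),$$
where $\mu_\pi$ is the stationary distribution of $(\pi,P)$. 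So the ``bounded shift'' does \emph{not} have the same limit for $\pi$ and $\pi'$ in general; it differs by $\sum_{s'}(\mu_\pi(s')-\mu_{\pi'}(s'))h(s')$, which is generically nonzero even when $\pi$ and $\pi'$ tie in average reward --- precisely the regime you acknowledge is the only one that matters. Your converse of Corollary~\ref{co:advantage} therefore still requires a cancellation argument you have not supplied. (To be fair, the paper's own proof also silently substitutes the limiting advantage into Theorem~\ref{th:advantage}, which is stated for the discounted advantage $\Delta_{\overline{\pi},P,r,\gamma}$; but its symmetric use of the theorem for both $r$ and $\tilde{r}$ reduces the residual to a difference of bias terms for the two rewards, whereas your one-sided converse exposes the full $\pi$-dependent shift.)
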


\subsubsection{Cumulative Advantage Based on Correct Transition Probabilities}

Let $P$ denote correct transition probabilities.  Recall that, with the cumulative advantage model, choice probabilities are determined by scores of the form $\sum_{t=0}^{T-1} \Delta_{*, P, r}(s_t,a_{t+1})$.  In this section, we will make a weaker assumption that the human uses scores of the form $\sum_{t=0}^{T-1} \Delta_{\overline{\pi}, P, r}(s_t,a_{t+1})$ for some policy that is not necessarily optimal.  We will establish that if the human knows $(\overline{\pi}, P)$ then an agent that knows $(\overline{\pi}, P)$ and learns via any of the three models we consider will recover a reward function $\tilde{r}$ that expresses $\succeq$ on $P$.  The results of this section extend those of \citet{knox2024models} and \citet{knox2024learning}.

First consider learning via the cumulative advantage model itself.  
If the agent can identify choice probabilities for every pair of trajectories, which is the case given infinite data of the right sort, then the agent can identify $\Delta_{\overline{\pi},P,r}$.  And since the agent knows $\overline{\pi}$ and $P$, the agent can impute a reward function $\tilde{r}$ such that $\Delta_{\overline{\pi},P,\tilde{r}} = \Delta_{\overline{\pi},P,r}$.  It follows from Corollary \ref{co:advantage-matching} that $\tilde{r}$ expresses $\succeq$ on $P$.

We next consider learning via the partial and bootstrapped return models.  Choices determined by the cumulative advantage $\sum_{t=0}^{T-1} \Delta_{\pi, P, r}(s_t,a_{t+1})$ are identical to ones based on the bootstrapped return $\sum_{t=0}^{T-1} \tilde{r}(s_t,a_{t+1}) + \tilde{V}(s_T)$ with with reward function $\tilde{r} = \Delta_{\pi, P, r}$ and value function $\tilde{V}=\mathbf{0}$.  Thes choices are also identical to ones based on partial return with reward function $\tilde{r} = \Delta_{\pi, P, r}$.  Consequently, learning via a bootstrapped return model will recover $(\tilde{r}, \tilde{V})$ and learning via a partial return model will recover $\tilde{r}$.  In either case, we obtain a reward function $\tilde{r} = \Delta_{\pi, P, r}$ that, by Corollary \ref{co:advantage}, expresses $\succeq$ on $P$.

\subsubsection{Cumulative Advantage Based on Incorrect Transition Probabilities}

We now consider choices made based on a transition matrix $\tilde{P} \neq P$, which expresses human beliefs about the environment.  In this case, choice probabilities are determined by scores of the form $\sum_{t=0}^{T-1} \Delta_{*, \tilde{P}, r}(s_t,a_{t+1})$.

When learning via a cumulative advantage model, the agent recovers $\Delta_{*, \tilde{P}, r}$ since these are determined by the choice probabilities. From this, the agent infers a reward function $\tilde{r}$ that results in the same advantage.  Importantly, however, when doing so, the agent uses $P$ rather than $\tilde{P}$. Thus, the recovered reward function $\tilde{r}$ satisfies $\Delta_{*, \tilde{P}, r} = \Delta_{*, P, \tilde{r}}$.  Note that the agent computes advantages $\Delta_{*, P, \tilde{r}}$ based on the true transition matrix $P$, which we have assumed the agent knows.  To optimize for preferences expressed by $\tilde{r}$, the agent would choose a policy that is greedy with respect to $\Delta_{*, P, \tilde{r}}$.  Since this policy is also greedy with respect to $\Delta_{*, \tilde{P}, r}$, it is optimal according to the human's beliefs about the environment.  But since these beliefs are erroneous, the agent may not pursue the true optimal policy.  In this way, learning via the cumulative advantage model produces a reward function that conflates goals and beliefs and is therefore fails to express $\succeq$ on $P$.

Now consider learning via the partial return.  In this case, with infinite data of the right sort, the agent recovers the reward function $\tilde{r} = \Delta_{*, \tilde{P}, r}$.  By Corollary \ref{co:advantage}, $r$ expresses $\succeq$ on $\tilde{P}$ and $\tilde{r}$ expresses $\succeq$ on $P$.  In other words, choosing a policy to optimize for $\tilde{r}$ and $P$ will induce desirable lotteries if states transition according to $\tilde{P}$.  However, such a policies may be undesirable if states transition according to $P$.

To make this point more concrete, let us revisit the example of Figure \ref{fig:1dGrid}.  Let $\tilde{P}$ represent the case where the grid wraps around, though according to the true transition matrix $P$ it does not.  Recall that $r$ assigns
large and positive when reaching the treasure and is otherwise small and negative.  The learned reward function $\tilde{r} = \Delta_{*, \tilde{P}, r}$ assigns high reward to moving left from the leftmost cell, though according to $r$ and $P$, that results in a self-transition and negative reward.  An agent that optimizes for $\tilde{r}$ and $P$ will move left from the leftmost cell, so $\tilde{r}$ fails to express $\succeq$ on $P$.  The same argument applies to learning via the bootstrapped return model.

\subsection{Choice Based on Bootstrapped Return}

We now consider the case when choices are based on bootstrapped return.  This corresponds to the third row of Table \ref{tab:model-robustness}.  We first note that if the agent knows that choices are made based on bootsrapped return, then under suitable conditions, the agent can clearly recover $\succeq$, as established in previous sections.

Let us consider the case when the agent assumes that choices are made by partial return. We first note that the agent is then misspecified in the sense that there is no way for the agent to perfectly estimate choice probabilities unless $V = 0$. A corollary is that what reward function that is recovered depends on the distribution over trajectory pairs shown to the human. We will leverage this fact. Consider some arbitrary MDP and consider two state-action pairs $(s,a)$ and $(s,a')$ with a shared state $s$. Assume that
\begin{align*}
    r(s,a) =& 0 \qquad r(s,a') = 0.5.
\end{align*}
Further, consider two states $s'$ and $s''$ that will serve as terminal states in the trajectories that we will construct. Assume that 
\begin{align*}
    V(s') =& -1 \qquad V(s'') = 1.
\end{align*}
We consider a dataset that grows indefinitely such that all trajectory pairs are shown an infinite number of times. We will show that it is possible to upweight certain trajectory pairs such that the agent infers a reward function estimate $\tilde{r}$ such that $\tilde{r}(s,a) > \tilde{r}(s,a')$, which would not be aligned with $r$. To see this, consider two trajectories $h = (s,a,s'')$ and $h' = (s,a',s')$. Since $h$ has a higher bootstrapped return than $h'$, $h$ will tend to be preferred over $h'$. Therefore, if $h$ and $h'$ show up as a pair in the dataset at a sufficiently high rate, the agent will infer that $\tilde{r}(s,a) > \tilde{r}(s,a')$. But this reward function clearly does not give the same ordering over trajectory lotteries as $r$. 

Further, even if the human had the correct value function, the above argument goes through. The reason is that as long as $V$ takes on values that are sufficiently large, you can ensure that the inferred $\tilde{r}$ orders state-action pairs in any way. Loosely speaking, if you want a state action pair to be ranked higher, you show that state action pair together with a terminal state with high value, more often. 

Finally, we ask: what if the agent assumes that choices are based on cumulative advantage? In this case, the agent does not necessarily recover a reward function that is aligned with the human's preferences. For instance, if the value estimates are zero everywhere, then choice data is effectively generated by partial return. Therefore, the same argument as in Section \ref{sec:robustness-data-partial-return} applies.


\section{Closing Remarks}

We have proposed and motivated a model of choice between partial trajectories based on bootstrapped return. Unlike with partial return, choices based on bootstrapped return are influenced by the human's beliefs, and thus provide a more realistic model of human choice. When choices are made based on bootstrapped return, we show that it is still possible for the agent to infer the human's goals, regardless of the human's beliefs. In this sense, the agent can disentangle the human's goals (as expressed by the reward function) from their beliefs about future observations. This is unlike when choices are based on cumulative advantage, which relies on the human having correct beliefs about the environment.

Disentangling goals from beliefs is also often an aim of inverse reinforcement learning (IRL)  \citep{baker2009action,evans2016learning,reddy2018you,shah2019feasibility}. In IRL, the agent aims to infer goals from human actions, rather than from human choices between trajectories. Indeed, our work relies heavily on the fact that trajectories include not only actions but outcomes.  In particular, we assume that the human can to some extent judge from a trajectory whether their goals are being met so that their choice communicates some information about that.   In IRL, on the other hand, the only information from the human is in the actions they execute, which does not necessarily inform the agent about whether goals are being met.  For this reason, disentangling goals from beliefs may require weaker assumptions when learning from choices between partial trajectories.   

More similar to our work, \citet{gong2020you} shows how the agent may infer the wrong reward function from trajectory ratings unless it accounts for potentially inaccurate beliefs of the human. Somewhat similar to \citep{knox2024models}, they model the human as basing their rating on how the trajectory deviates from what the human thinks is optimal behavior. It seems reasonable that bootstrapped return could also be used as a basis for modeling how humans rate (rather than compare) trajectories. It would be interesting to compare such a model to that of \citep{gong2020you}.

Our work adds to the growing chorus of work on RLHF emphasizing that current methods use unrealistic models of human behavior (see, e.g., \citep{tien2023causal, knox2024models} and, for a survey, \citep{casper2023open}). Unlike previous works, our work focuses on how human beliefs about future rewards may influence choice between trajectories. One issue pertaining to human beliefs, but not addressed in our work, is that of partial observability which was recently discussed by \citet{lang2024your}: when making comparisons between trajectories, the human often does not have full access to the agent's trajectory. This may incentivize the agent to pursue trajectories such that the observation sequence revealed to the human looks good, while the underlying state sequence is actually not. Combining their work with a choice model based on bootstrapped return may be an interesting avenue for future work.  

There is a clear connection between our work and \citet{savage1972foundations}’s axiomatization of subjective expected utility maximization. In his framework, goals are represented by a utility function rather than a reward function, and beliefs by a subjective probability distribution over states, rather than over future infinite trajectories. While the choice setting considered by Savage is very different from ours, it shares the basic idea of choices being influenced by both goals and beliefs. One interesting avenue for future work is to, in Savage’s spirit, derive our two axioms in terms of more basic axioms that reference neither reward nor probability.

We mention three possible extensions of the choice model we have proposed. First, rather than modeling the human as having fixed transition matrix in mind, they could be modeled as being uncertain, with a belief distribution over transition matrices. Then, when assessing desirability of a trajectory, they may account for how observing the trajectory would update this belief distribution.  Second, one strong assumption made in this work is that the human's reward function is defined in terms of an observed state. We believe that it should be possible to relax this assumption to accommodate a broader set of preferences while still retaining identifiability of $\succeq$. Third, our model could easily be extended to also allow for choices based on what  \citet{knox2024models} calls \textit{deterministic regret}. This can easily be done by adding to the bootstrapped return another quantity computed for the starting state. This added quantity results in a choice model that is strictly more general, as the bootstrapped return is recovered as a special case.

Finally, we note that we expect accounting for human beliefs to become increasingly important as tasks become increasingly complex.  In particular, we expect that bootstrapped return to play a more important role in complex tasks with delayed consequences that are difficult for humans to anticipate.

\section*{Acknowledgements}

This research was supported by a seed fund from the Stanford Institute for Human-Centered AI, contributed by SCBX and SCB10X, and a grant from the US Army Research Office.  We thank Derya Cansever, Micah Carroll, Stephane Hatgis-Kessell, Anmol Kagrecha, Saurabh Kumar, Ramesh Johari, and Tanwa Arpornthip for stimulating conversations and helpful feedback.

\bibliographystyle{plainnat}
\bibliography{references}

\begin{thebibliography}{23}
\providecommand{\natexlab}[1]{#1}
\providecommand{\url}[1]{\texttt{#1}}
\expandafter\ifx\csname urlstyle\endcsname\relax
  \providecommand{\doi}[1]{doi: #1}\else
  \providecommand{\doi}{doi: \begingroup \urlstyle{rm}\Url}\fi

\bibitem[Akrour et~al.(2012)Akrour, Schoenauer, and Sebag]{akrour2012april}
Riad Akrour, Marc Schoenauer, and Mich{\`e}le Sebag.
\newblock April: Active preference learning-based reinforcement learning.
\newblock In \emph{Machine Learning and Knowledge Discovery in Databases:
  European Conference, ECML PKDD 2012, Bristol, UK, September 24-28, 2012.
  Proceedings, Part II 23}, pages 116--131. Springer, 2012.

\bibitem[Baker et~al.(2009)Baker, Saxe, and Tenenbaum]{baker2009action}
Chris~L Baker, Rebecca Saxe, and Joshua~B Tenenbaum.
\newblock Action understanding as inverse planning.
\newblock \emph{Cognition}, 113\penalty0 (3):\penalty0 329--349, 2009.

\bibitem[Bastianello and Faro(2019)]{bastianello2019time}
Lorenzo Bastianello and Jos{\'e}~Heleno Faro.
\newblock Time discounting under uncertainty.
\newblock \emph{arXiv preprint arXiv:1911.00370}, 2019.

\bibitem[Brown and Niekum(2019)]{brown2019deep}
Daniel~S Brown and Scott Niekum.
\newblock Deep {Bayesian} reward learning from preferences.
\newblock \emph{arXiv preprint arXiv:1912.04472}, 2019.

\bibitem[Casper et~al.(2023)Casper, Davies, et~al.]{casper2023open}
Stephen Casper, Xander Davies, et~al.
\newblock Open problems and fundamental limitations of reinforcement learning
  from human feedback.
\newblock \emph{Transactions on Machine Learning Research}, 2023.

\bibitem[Christiano et~al.(2017)Christiano, Leike, Brown, Martic, Legg, and
  Amodei]{christiano2017deep}
Paul~F Christiano, Jan Leike, Tom Brown, Miljan Martic, Shane Legg, and Dario
  Amodei.
\newblock Deep reinforcement learning from human preferences.
\newblock \emph{Advances in neural information processing systems}, 30, 2017.

\bibitem[Evans et~al.(2016)Evans, Stuhlm{\"u}ller, and
  Goodman]{evans2016learning}
Owain Evans, Andreas Stuhlm{\"u}ller, and Noah Goodman.
\newblock Learning the preferences of ignorant, inconsistent agents.
\newblock In \emph{Proceedings of the AAAI Conference on Artificial
  Intelligence}, volume~30, 2016.

\bibitem[Gong and Zhang(2020)]{gong2020you}
Ze~Gong and Yu~Zhang.
\newblock What is it you really want of me? {Generalized} reward learning with
  biased beliefs about domain dynamics.
\newblock In \emph{Proceedings of the AAAI Conference on Artificial
  Intelligence}, volume~34, pages 2485--2492, 2020.

\bibitem[Hadfield-Menell et~al.(2016)Hadfield-Menell, Russell, Abbeel, and
  Dragan]{hadfield2016cooperative}
Dylan Hadfield-Menell, Stuart~J Russell, Pieter Abbeel, and Anca Dragan.
\newblock Cooperative inverse reinforcement learning.
\newblock \emph{Advances in neural information processing systems}, 29, 2016.

\bibitem[Ibarz et~al.(2018)Ibarz, Leike, Pohlen, Irving, Legg, and
  Amodei]{ibarz2018reward}
Borja Ibarz, Jan Leike, Tobias Pohlen, Geoffrey Irving, Shane Legg, and Dario
  Amodei.
\newblock Reward learning from human preferences and demonstrations in {Atari}.
\newblock \emph{Advances in neural information processing systems}, 31, 2018.

\bibitem[Knox et~al.(2024{\natexlab{a}})Knox, Hatgis-Kessell, Adalgeirsson,
  Booth, Dragan, Stone, and Niekum]{knox2024learning}
W~Bradley Knox, Stephane Hatgis-Kessell, Sigurdur~Orn Adalgeirsson, Serena
  Booth, Anca Dragan, Peter Stone, and Scott Niekum.
\newblock Learning optimal advantage from preferences and mistaking it for
  reward.
\newblock In \emph{Proceedings of the AAAI Conference on Artificial
  Intelligence}, volume~38, pages 10066--10073, 2024{\natexlab{a}}.

\bibitem[Knox et~al.(2024{\natexlab{b}})Knox, Hatgis-Kessell, Booth, Niekum,
  Stone, and Allievi]{knox2024models}
W~Bradley Knox, Stephane Hatgis-Kessell, Serena Booth, Scott Niekum, Peter
  Stone, and Alessandro Allievi.
\newblock Models of human preference for learning reward functions.
\newblock \emph{Transactions on Machine Learning Research}, 2024{\natexlab{b}}.

\bibitem[Koopmans(1960)]{koopmans1960stationary}
Tjalling~C Koopmans.
\newblock Stationary ordinal utility and impatience.
\newblock \emph{Econometrica: Journal of the Econometric Society}, pages
  287--309, 1960.

\bibitem[Koopmans(1972)]{koopmans1972representation}
Tjalling~C Koopmans.
\newblock Representation of preference orderings over time.
\newblock \emph{Decision and organization}, 57:\penalty0 100, 1972.

\bibitem[Lang et~al.(2024)Lang, Foote, Russell, Dragan, Jenner, and
  Emmons]{lang2024your}
Leon Lang, Davis Foote, Stuart Russell, Anca Dragan, Erik Jenner, and Scott
  Emmons.
\newblock When your {AI}s deceive you: Challenges of partial observability in
  reinforcement learning from human feedback.
\newblock \emph{arXiv preprint arXiv:2402.17747}, 2024.

\bibitem[Puterman(2014)]{puterman2014markov}
Martin~L Puterman.
\newblock \emph{Markov decision processes: discrete stochastic dynamic
  programming}.
\newblock John Wiley \& Sons, 2014.

\bibitem[Reddy et~al.(2018)Reddy, Dragan, and Levine]{reddy2018you}
Sid Reddy, Anca Dragan, and Sergey Levine.
\newblock Where do you think you're going? {Inferring} beliefs about dynamics
  from behavior.
\newblock \emph{Advances in Neural Information Processing Systems}, 31, 2018.

\bibitem[Sadigh et~al.(2017)Sadigh, Dragan, Sastry, and
  Seshia]{sadigh2017active}
Dorsa Sadigh, Anca Dragan, Shankar Sastry, and Sanjit Seshia.
\newblock \emph{Active preference-based learning of reward functions}.
\newblock UC Berkeley, 2017.

\bibitem[Savage(1972)]{savage1972foundations}
Leonard~J Savage.
\newblock \emph{The foundations of statistics}.
\newblock Courier Corporation, 1972.

\bibitem[Shah et~al.(2019)Shah, Gundotra, Abbeel, and
  Dragan]{shah2019feasibility}
Rohin Shah, Noah Gundotra, Pieter Abbeel, and Anca Dragan.
\newblock On the feasibility of learning, rather than assuming, human biases
  for reward inference.
\newblock In \emph{International Conference on Machine Learning}, pages
  5670--5679. PMLR, 2019.

\bibitem[Sutton and Barto(2020)]{sutton2020reinforcement}
Richard~S Sutton and Andrew~G Barto.
\newblock \emph{Reinforcement Learning: An Introduction}.
\newblock The MIT Press, second edition, 2020.

\bibitem[Tien et~al.(2023)Tien, He, Erickson, Dragan, and
  Brown]{tien2023causal}
Jeremy Tien, Jerry Zhi-Yang He, Zackory Erickson, Anca~D Dragan, and Daniel~S
  Brown.
\newblock Causal confusion and reward misidentification in preference-based
  reward learning.
\newblock In \emph{International Conference on Machine Learning}. PMLR, 2023.

\bibitem[Von~Neumann and Morgenstern(1947)]{von1947theory}
John Von~Neumann and Oskar Morgenstern.
\newblock Theory of games and economic behavior, 2nd rev.
\newblock 1947.

\end{thebibliography}

\newpage
\appendix

\section{Proofs}

\subsection{Average and Relative Reward}

Before proving results from the body of the paper, we provide a few helpful lemmas, together with their proofs when nontrivial and nonstandard.  Our first lemma follows from the fact that, by construction of $\mathcal{L}_\infty$, for each $\ell \in \mathcal{L}_\infty$ and $s \in\states$, $\lim_{T \rightarrow \infty} \frac{1}{T} \E_\ell\left[\sum_{t=0}^{T-1} \1(S_t=s)\right]$ exists.
\begin{lemma}
For all $r:\states\times\actions\rightarrow \Re$ and $\ell \in \mathcal{L}_\infty$, 
$$\overline{r}_{\ell,r} = \lim_{T \rightarrow \infty} \frac{1}{T} \E_\ell\left[\sum_{t=0}^{T-1} r(S_t,A_t)\right]$$
is well-defined and finite.
\end{lemma}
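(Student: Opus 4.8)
The plan is to reduce the claim to the already-granted existence of relative frequencies by expressing the reward along a trajectory as a finite linear combination of occupation indicators. Since $\states$ and $\actions$ are finite, for every $t$ we have $r(S_t,A_t) = \sum_{(s,a)\in\states\times\actions} r(s,a)\,\1(S_t=s,\,A_t=a)$, so by linearity of expectation
$$\frac{1}{T}\,\E_\ell\!\left[\sum_{t=0}^{T-1} r(S_t,A_t)\right] \;=\; \sum_{(s,a)\in\states\times\actions} r(s,a)\left(\frac{1}{T}\,\E_\ell\!\left[\sum_{t=0}^{T-1}\1(S_t=s,\,A_t=a)\right]\right).$$
The index set $\states\times\actions$ is finite and does not vary with $T$, so the right-hand side is a fixed linear combination of the time-averaged occupation probabilities of state--action pairs.

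Next I would invoke the property defining $\mathcal{L}_\infty$ — read at the granularity of state--action pairs, which is the level the decomposition requires and the level actually used elsewhere in the paper, since a trajectory records both components — to conclude that for each $(s,a)$ the limit $\rho_\ell(s,a) := \lim_{T\to\infty}\frac{1}{T}\,\E_\ell[\sum_{t=0}^{T-1}\1(S_t=s,A_t=a)]$ exists and lies in $[0,1]$. A finite linear combination of convergent sequences converges, so $\overline{r}_{\ell,r} = \sum_{(s,a)} r(s,a)\,\rho_\ell(s,a)$ is well-defined. Finiteness is then immediate: with $M := \max_{(s,a)}|r(s,a)| < \infty$, a maximum over a finite set of reals, and with $\sum_{(s,a)}\frac{1}{T}\,\E_\ell[\sum_{t=0}^{T-1}\1(S_t=s,A_t=a)] = 1$ for every $T$ (hence $\sum_{(s,a)}\rho_\ell(s,a)=1$), we obtain $|\overline{r}_{\ell,r}| \le M < \infty$.

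I do not anticipate a genuine obstacle here; this is a bookkeeping lemma whose content is entirely in the defining assumption on $\mathcal{L}_\infty$. The one point worth a moment's care is the passage from ``relative frequencies of states exist'' to ``relative frequencies of state--action pairs exist'' — that is, ensuring the $\mathcal{L}_\infty$ hypothesis is applied at the granularity the sum demands (equivalently, restricting attention to state-dependent $r$ if one insists on the literal state-only reading of the definition). Once that granularity is pinned down, linearity of expectation and finiteness of $\states\times\actions$ do all the remaining work.
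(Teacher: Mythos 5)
Your proof is correct and matches the paper's (implicit) argument: the paper offers no proof beyond asserting that the lemma follows from the defining property of $\mathcal{L}_\infty$, and your decomposition of $r(S_t,A_t)$ into a finite linear combination of occupation indicators, followed by linearity of expectation and a uniform bound $M=\max_{(s,a)}|r(s,a)|$, is exactly the bookkeeping that assertion leaves to the reader. The caveat you flag is genuine: the definition of $\mathcal{L}_\infty$ as written only guarantees convergence of $\frac{1}{T}\E_\ell[\sum_{t=0}^{T-1}\1(S_t=s)]$ for each state $s$, not of the state--action frequencies your decomposition needs (one can construct a trajectory with constant state but non-averaging action sequence for which the claimed limit fails), so the lemma requires either reading that definition at the state--action granularity --- as the paper's own treatment of Axiom \ref{ax:alignment}, with its Markov process on $\states\times\actions$, suggests is intended --- or restricting to state-only rewards.
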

Going forward, we take $\overline{r}_{\ell,r}$ to be defined as in this lemma.
If a lottery $\ell$ is induced by $(\pi,P,h)$, we alternatively write $\overline{r}_{\pi,P,r,h} = \overline{r}_{\ell,r}$.  Recall that if $(\pi,P) \in \mathcal{P}$ then $\lim_{T\rightarrow \infty} \E[\frac{1}{T} \1(S_t=s')|s]$ does not depend on $s$.  This implies the following result.
\begin{lemma}
For all $(\pi,P) \in \mathcal{P}$, $\overline{r}_{\pi,P,r,h}$ does not depend on $h$.
\end{lemma}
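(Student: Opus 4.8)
The plan is to express the average reward $\overline{r}_{\pi,P,r,h}$ entirely in terms of the stationary state frequencies of the Markov chain induced by $(\pi,P)$, and then to invoke the defining property of $\mathcal{P}$ --- namely that these frequencies do not depend on the starting state --- to conclude that the value is independent of $h$. Throughout, finiteness of $\states$ and $\actions$ ensures $r$ is bounded, and the preceding lemma (applied to the lottery $\mathbb{P}_{\pi,P}(\cdot\mid h) \in \mathcal{L}_\infty$) guarantees that the limit defining $\overline{r}_{\pi,P,r,h}$ exists.

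First I would fix a partial trajectory $h = (x_0,a_0,x_1,\ldots,x_{n-1},a_{n-1},x_n)$ and split the running sum as $\sum_{t=0}^{T-1} r(S_t,A_t) = \sum_{t=0}^{n-1} r(x_t,a_t) + \sum_{t=n}^{T-1} r(S_t,A_t)$. Conditioned on $h$, the first sum is a fixed constant, so after multiplying by $\tfrac1T$ and letting $T \to \infty$ it contributes nothing. For the tail, I would use that conditioned on $h$ the actions $A_t$ for $t \ge n$ are drawn from $\pi(\cdot\mid S_t)$, so with $r_\pi(s) := \sum_a \pi(a\mid s)\, r(s,a)$ we get $\E_{\pi,P}[r(S_t,A_t)\mid h] = \E_{\pi,P}[r_\pi(S_t)\mid S_n = x_n]$, the last equality using the Markov property to replace conditioning on all of $h$ by conditioning on its final state $x_n$.

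Next I would write, using that $\states$ is finite to exchange a finite sum with the expectation, $\tfrac1T\,\E_{\pi,P}\!\big[\sum_{t=n}^{T-1} r_\pi(S_t)\,\big|\,x_n\big] = \sum_{s'\in\states} r_\pi(s')\cdot \tfrac1T\,\E_{\pi,P}\!\big[\sum_{t=n}^{T-1}\1(S_t=s')\,\big|\,x_n\big]$, and then shift the time index by $n$ to see that $\tfrac1T\,\E_{\pi,P}[\sum_{t=n}^{T-1}\1(S_t=s')\mid x_n] = \tfrac1T\,\E_{\pi,P}[\sum_{u=0}^{T-1-n}\1(S_u=s')\mid x_n]$. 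Since a finite shift of the upper limit leaves a Ces\`aro average unchanged, this converges as $T\to\infty$ to $\rho(s') := \lim_{T\to\infty}\tfrac1T\,\E_{\pi,P}[\sum_{u=0}^{T-1}\1(S_u=s')\mid x_n]$, which by the definition of $\mathcal{P}$ does not depend on $x_n$. Combining the pieces yields $\overline{r}_{\pi,P,r,h} = \sum_{s'\in\states} \rho(s')\, r_\pi(s')$, which is visibly independent of $h$.

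The only real obstacle is the bookkeeping: keeping the fixed prefix separate, handling the finite index shift, and justifying that the (finite) sum over $\states$ commutes with the limit. All of this is routine given boundedness of $r$ on the finite space $\states\times\actions$ and the already-established existence of the relevant Ces\`aro limits, so the argument reduces to assembling these observations in the order above.
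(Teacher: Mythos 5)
Your argument is correct and is exactly the detailed version of what the paper intends: the paper states this lemma without proof, remarking only that it follows because membership in $\mathcal{P}$ makes the limiting expected state frequencies independent of the initial state, and your decomposition (discard the finite prefix, reduce to conditioning on the terminal state via the Markov property, and write the Ces\`aro limit as $\sum_{s'} \rho(s')\, r_\pi(s')$) is the standard way to fill that in. No gaps.
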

Since $\overline{r}_{\pi,P,r,h}$ does not depend on $h$, we will suppress the dependence and simply write $\overline{r}_{\pi,P,r}$.

Let
$$\tilde{r}_{\pi,P,r}(s,a) = r(s,a) - \overline{r}_{\pi,P,r}.$$
We refer to $\overline{r}_{\pi,P,r}$ as the {\it average reward} and $\tilde{r}_{\pi,P,r}$ as the {\it relative reward}.

\subsection{Relative Value}

Our next lemma introduces a notion of relative value for a lottery induced by $(\pi,P,h)$.  Intuitively, this is the incremental value a lottery offers due to starting with the partial trajectory $h$ rather than others.  That the relative value is well-defined and finite follows from the fact that expected state frequencies do note depend on the initial state.
\begin{lemma}
For all $(\pi,P) \in \mathcal{P}$, $h \in \mathcal{H}$, and $r:\states\times\actions \rightarrow \Re$, if $\ell$ is the lottery induced by $(\pi,P,h)$ then
$$\tilde{v}_{\ell,r} = \lim_{\gamma\uparrow 1} \E_{\pi,P}\left[\sum_{t=0}^\infty \gamma^t \tilde{r}_{\ell,r}(S_t,A_t)\Big| h\right]$$
is well-defined and finite. 
\end{lemma}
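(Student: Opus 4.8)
The plan is to strip off the deterministic prefix contributed by $h$ and reduce everything to the behaviour of the discounted value of a finite unichain Markov reward process as $\gamma\uparrow1$. Write $h=(s_0,a_0,\dots,s_T)$, let $\bar P_{ss'}=\sum_a\pi(a|s)P_{ass'}$ be the induced state transition matrix, and let $\rho(s)=\sum_a\pi(a|s)\,\tilde r_{\ell,r}(s,a)$ be the expected relative reward out of state $s$. Because under $\Pr_{\pi,P}(\cdot|h)$ the first $T$ state--action pairs equal those of $h$ and the process is Markov thereafter, for each $\gamma\in(0,1)$
$$
\E_{\pi,P}\Big[\textstyle\sum_{t=0}^{\infty}\gamma^t\tilde r_{\ell,r}(S_t,A_t)\,\Big|\,h\Big]
= \sum_{t=0}^{T-1}\gamma^t\tilde r_{\ell,r}(s_t,a_t) + \gamma^T\big((I-\gamma\bar P)^{-1}\rho\big)(s_T),
$$
and every term is finite since $\tilde r_{\ell,r}$ is bounded on the finite set $\states\times\actions$ and $I-\gamma\bar P$ is invertible. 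The prefix converges to $\sum_{t=0}^{T-1}\tilde r_{\ell,r}(s_t,a_t)$ as $\gamma\uparrow1$, so the lemma comes down to showing $\lim_{\gamma\uparrow1}(I-\gamma\bar P)^{-1}\rho$ exists and is finite, i.e.\ the $T=0$ case (which then also defines the relative value function $\tilde V_{\pi,P,r}(s)$ used in the body).

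For that I would use the hypothesis $(\pi,P)\in\mathcal P$, which says the Cesàro limit $\Pi:=\lim_N\frac1N\sum_{t=0}^{N-1}\bar P^{\,t}$ (always defined for a finite chain) has identical rows, i.e.\ $\bar P$ is unichain with unique stationary distribution $\mu$ and $\Pi=\mathbf{1}\mu^\top$. Using $\bar P\Pi=\Pi\bar P=\Pi^2=\Pi$, so $\bar P^{\,t}-\Pi=(\bar P-\Pi)^t$ for $t\ge1$, the Neumann series gives the resolvent decomposition
$$
(I-\gamma\bar P)^{-1}=\sum_{t=0}^{\infty}\gamma^t\bar P^{\,t}=\frac{1}{1-\gamma}\Pi+\Big(I-\Pi+\sum_{t=1}^{\infty}\gamma^t(\bar P-\Pi)^t\Big)=:\frac{1}{1-\gamma}\Pi+R_\gamma .
$$
The eigenvalues of $M:=\bar P-\Pi$ are $0$ together with the eigenvalues of $\bar P$ other than the simple eigenvalue $1$, so $1-\gamma\lambda\neq0$ for every eigenvalue $\lambda$ of $M$ and every $\gamma\in(0,1]$; hence $(I-\gamma M)^{-1}$ is well-defined and continuous up to $\gamma=1$, $\sum_{t\ge1}\gamma^tM^t=\gamma M(I-\gamma M)^{-1}$ converges, and $R_\gamma\to R_1:=I-\Pi+M(I-M)^{-1}$, a finite matrix. (This is just the Laurent expansion of $(I-\gamma\bar P)^{-1}$ about $\gamma=1$; one could alternatively cite the corresponding standard result.)

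The point that kills the singular term is that $\rho$ has zero $\mu$-mean: by the ergodic theorem for the unichain $\bar P$, the average reward is the stationary average $\overline{r}_{\ell,r}=\sum_s\mu(s)\sum_a\pi(a|s)r(s,a)$, so $\mu^\top\rho=\sum_s\mu(s)\sum_a\pi(a|s)r(s,a)-\overline{r}_{\ell,r}=0$ and hence $\Pi\rho=\mathbf{1}(\mu^\top\rho)=0$. Therefore $(I-\gamma\bar P)^{-1}\rho=R_\gamma\rho\to R_1\rho$, which is finite, and combining with the prefix, $\tilde v_{\ell,r}=\sum_{t=0}^{T-1}\tilde r_{\ell,r}(s_t,a_t)+(R_1\rho)(s_T)\in\Re$. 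That $\tilde v_{\ell,r}$ does not depend on $h$ beyond the lottery $\ell$ is then immediate: $\E_{\pi,P}[\,\cdot\,|h]$ denotes expectation under $\ell=\Pr_{\pi,P}(\cdot|h)$ and $\tilde r_{\ell,r}(s,a)=r(s,a)-\overline{r}_{\ell,r}$ is itself a functional of $\ell$ and $r$ alone, so the expression depends on $(\pi,P,h)$ only through $\ell$. The one real obstacle is that $\bar P$ may be periodic, so $\bar P^{\,t}$ need not converge to $\Pi$ and one cannot naively sum $\sum_t\gamma^t(\bar P^{\,t}-\Pi)$ to a deviation matrix term by term; the continuity-of-the-resolvent argument above is precisely what sidesteps that.
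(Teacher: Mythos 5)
Your proof is correct, and it fills in exactly the gap the paper leaves open: the appendix states this lemma without proof, remarking only that finiteness ``follows from the fact that expected state frequencies do not depend on the initial state,'' and your Laurent/resolvent expansion of $(I-\gamma\bar P)^{-1}$ at $\gamma=1$ is the standard way to make that remark rigorous --- membership in $\mathcal{P}$ forces the Ces\`aro limit to be rank one, $\Pi=\mathbf{1}\mu^\top$, the relative reward has zero $\mu$-mean so the $\tfrac{1}{1-\gamma}\Pi$ pole is annihilated, and the remainder converges because $1$ is not an eigenvalue of $\bar P-\Pi$ (your continuity-of-the-resolvent step correctly handles periodic chains, where $\bar P^{\,t}\not\to\Pi$). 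The one caveat concerns the final clause: read literally, ``$\tilde v_{\ell,r}$ does not depend on $h$'' cannot be intended, since the very next lemma decomposes $\tilde v_{\ell,r}$ as a prefix sum plus $\tilde V_{\pi,P,r}(s_T)$, which manifestly varies with $h$; your reading --- that the quantity is a functional of $(\ell,r)$ alone, with the $h$-independence really attaching to the relative reward $\tilde r_{\ell,r}$ via the preceding lemma --- is the sensible one, and worth stating explicitly as you do.
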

Going forward, we take $\tilde{v}_{\ell,r}$ to be defined as in this lemma.

The following lemma specializes the previous one to the case where $h=(s)$ and indicates that $\tilde{v}_{\ell,r}$ can be decomposed into a sum of relative rewards generated by $h$ and the value of a subsequent lottery.
\begin{lemma}
\label{le:RelativeValue}
For all $(\pi,P) \in \mathcal{P}$, $s \in \states$, and $r:\states\times\actions \rightarrow \Re$,
$$\tilde{V}_{\pi,P,r}(s) = \lim_{\gamma \uparrow 1} \E_{\pi,P}\left[\sum_{t=0}^\infty \gamma^t \tilde{r}_{\pi,P,r}(S_t,A_t) \Big| s\right]$$
is well-defined and finite. Further, for any $h = (s_0,a_1,\ldots,s_T) \in \mathcal{H}$, if $\ell$ is the lottery induced by $\pi$, $P$, and $h$ then $$\tilde{v}_{\ell,r} = \sum_{t=0}^{T-1} \tilde{r}_{\pi,P,r}(s_t,a_t) + \tilde{V}_{\pi,P,r}(s_T).$$
\end{lemma}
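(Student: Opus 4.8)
The plan is to obtain both assertions from the preceding lemma on relative value of lotteries induced by $(\pi,P,h)$, specializing it first to the degenerate trajectory $h=(s)$ and then applying it to a general $h$, in combination with the Markov structure of $\Pr_{\pi,P}(\cdot\mid h)$ and a routine passage to the limit $\gamma\uparrow 1$. There is no deep obstacle here; the substantive content — that the discounted sum of relative rewards stays bounded as $\gamma\uparrow 1$ — is already packaged in the preceding lemma, so the work is just bookkeeping around a time shift and checking that the limits I want to take actually exist.

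First I would show the well-definedness claim. Take $h=(s)$, so the induced lottery is $\ell=\Pr_{\pi,P}(\cdot\mid s)$. Since $(\pi,P)\in\mathcal{P}$, the average reward $\overline{r}_{\pi,P,r,h}$ is independent of the conditioning trajectory (established above), so $\overline{r}_{\ell,r}=\overline{r}_{\pi,P,r}$ and hence $\tilde{r}_{\ell,r}=\tilde{r}_{\pi,P,r}$. The preceding lemma then says that
\[
\tilde{v}_{\ell,r}=\lim_{\gamma\uparrow 1}\E_{\pi,P}\!\left[\sum_{t=0}^\infty \gamma^t \tilde{r}_{\pi,P,r}(S_t,A_t)\,\Big|\, s\right]
\]
is well-defined and finite, which is exactly the statement that $\tilde{V}_{\pi,P,r}(s)$ is well-defined and finite, and we record the identification $\tilde{V}_{\pi,P,r}(s)=\tilde{v}_{\ell,r}$ for this $\ell$.

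Next I would prove the decomposition. Fix $h=(s_0,a_0,\ldots,s_T)$ and let $\ell$ be induced by $(\pi,P,h)$. For each fixed $\gamma\in(0,1)$, boundedness of $\tilde{r}_{\pi,P,r}$ (finiteness of $\states$ and $\actions$) makes $\sum_{t\ge 0}\gamma^t|\tilde{r}_{\pi,P,r}(S_t,A_t)|$ summable, so I may split the expectation at time $T$. Conditioned on $h$, the pairs $(S_0,A_0),\ldots,(S_{T-1},A_{T-1})$ are deterministically $(s_0,a_0),\ldots,(s_{T-1},a_{T-1})$, and by the Markov property the tail $(S_T,A_T,S_{T+1},\ldots)$ has the law of an infinite trajectory generated by $(\pi,P)$ started from $s_T$. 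Shifting the index in the tail sum by $T$ (pulling out $\gamma^T$) gives, for every $\gamma\in(0,1)$,
\[
\E_{\pi,P}\!\left[\sum_{t=0}^\infty \gamma^t \tilde{r}_{\pi,P,r}(S_t,A_t)\,\Big|\, h\right]
=\sum_{t=0}^{T-1}\gamma^t \tilde{r}_{\pi,P,r}(s_t,a_t)+\gamma^T\,\E_{\pi,P}\!\left[\sum_{t=0}^\infty \gamma^t \tilde{r}_{\pi,P,r}(S_t,A_t)\,\Big|\, s_T\right].
\]
Finally I let $\gamma\uparrow 1$: the left-hand side tends to $\tilde{v}_{\ell,r}$ by definition (the limit exists by the preceding lemma); the finite sum $\sum_{t=0}^{T-1}\gamma^t\tilde{r}_{\pi,P,r}(s_t,a_t)$ tends to $\sum_{t=0}^{T-1}\tilde{r}_{\pi,P,r}(s_t,a_t)$ termwise; and $\gamma^T\to 1$ while the remaining expectation tends to $\tilde{V}_{\pi,P,r}(s_T)$ by the first part. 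Equating limits yields $\tilde{v}_{\ell,r}=\sum_{t=0}^{T-1}\tilde{r}_{\pi,P,r}(s_t,a_t)+\tilde{V}_{\pi,P,r}(s_T)$, as claimed. The only place that needs care is confirming each limit exists before recombining them — which is precisely what the preceding relative-value lemma provides — so no dominated-convergence or uniform-integrability argument beyond that is required.
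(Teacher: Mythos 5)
Your proof is correct and follows essentially the same route as the paper: the well-definedness claim is the preceding relative-value lemma specialized to $h=(s)$ (using that $\overline{r}_{\ell,r}=\overline{r}_{\pi,P,r}$ for $(\pi,P)\in\mathcal{P}$), and the decomposition is obtained by splitting the discounted sum at time $T$, invoking the Markov property to identify the tail with the process started at $s_T$, pulling out $\gamma^T$, and letting $\gamma\uparrow 1$. Your added care in checking that each limit exists before recombining is a slight tightening of the paper's ``simple algebra'' chain, but the argument is the same.
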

Going forward, we take $\tilde{V}_{\pi,P,r}(s)$ to be defined as in this lemma.
Note that the decomposition follows from simple algebra:
\begin{align*}
\tilde{v}_{\ell,r} 
=& \lim_{\gamma\uparrow 1} \E_{\pi,P}\left[\sum_{t=0}^\infty \gamma^t \tilde{r}_{\ell,r}(S_t,A_t)\Big| h\right] \\
=& \sum_{t=0}^{T-1} \tilde{r}_{\ell,r}(s_t,a_t) + \lim_{\gamma\uparrow 1} \E_{\pi,P}\left[\sum_{t=T}^\infty \gamma^t \tilde{r}_{\ell,r}(S_t,A_t)\Big| h\right] \\
=& \sum_{t=0}^{T-1} \tilde{r}_{\ell,r}(s_t,a_t) + \lim_{\gamma\uparrow 1} \gamma^T \E_{\pi,P}\left[\sum_{t=0}^\infty \gamma^t \tilde{r}_{\ell,r}(S_t,A_t)\Big| s_T\right] \\
=& \sum_{t=0}^{T-1} \tilde{r}_{\pi,P,r}(s_t,a_t) + \tilde{V}_{\pi,P,r}(s_T)
\end{align*}
where $\ell$ is the lottery induced by $\pi$, $P$ and $h$.

\subsection{Lottery Preferences via Relative Value}

Recall from Definition \ref{def:express-infinite} that $r$ is said to express $\succeq$ if, for all $\ell,\ell' \in \mathcal{L}_\infty$, $\ell \succeq \ell'$ if and only if $\lim_{\gamma \uparrow 1} (v_{\ell,r,\gamma} - v_{\ell',r,\gamma}) \geq 0$.  The following lemma establishes that the condition can be substituted by $\tilde{v}_{\ell,r} \geq \tilde{v}_{\ell',r}$, which does not involve taking a limit.

\begin{lemma}
\label{le:PreferencesViaRelativeValue}
If $r$ expresses $\succeq$ then, for all lotteries $\ell$ and $\ell'$ induced by some $(\pi,P) \in \mathcal{P}$ with $h = (s_0,a_0,\ldots,s_T)$ and $h' = (s'_0,a'_0,\ldots,s'_{T'})$, respectively, 
$$\ell \succeq \ell' \quad \text{if and only if} \quad \tilde{v}_{\ell,r} \geq \tilde{v}_{\ell',r}.$$
\end{lemma}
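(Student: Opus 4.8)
The plan is to compute $\lim_{\gamma\uparrow1}(v_{\ell,r,\gamma}-v_{\ell',r,\gamma})$ in closed form and then read the equivalence straight off Definition~\ref{def:express-infinite}. Start from the pointwise identity $r(s,a)=\tilde{r}_{\ell,r}(s,a)+\overline{r}_{\ell,r}$ (the definition of the relative reward) and substitute it into the discounted return. Because $r$, and hence $\tilde{r}_{\ell,r}$, is bounded on the finite set $\states\times\actions$, for every $\gamma\in(0,1)$ both resulting series converge absolutely and may be separated:
$$v_{\ell,r,\gamma}=\E_\ell\!\left[\sum_{t=0}^\infty\gamma^t\,\tilde{r}_{\ell,r}(S_t,A_t)\right]+\frac{\overline{r}_{\ell,r}}{1-\gamma},$$
and similarly with $\ell'$ in place of $\ell$. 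Subtracting,
\begin{align*}
v_{\ell,r,\gamma}-v_{\ell',r,\gamma}
&=\E_\ell\!\left[\sum_{t=0}^\infty\gamma^t\,\tilde{r}_{\ell,r}(S_t,A_t)\right]-\E_{\ell'}\!\left[\sum_{t=0}^\infty\gamma^t\,\tilde{r}_{\ell',r}(S_t,A_t)\right]\\
&\quad+\frac{\overline{r}_{\ell,r}-\overline{r}_{\ell',r}}{1-\gamma}.
\end{align*}

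Next I pass to the limit. In the situation where $\tilde{v}_{\ell,r}$ and $\tilde{v}_{\ell',r}$ are defined — $\ell$ and $\ell'$ induced by a common pair $(\pi,P)\in\mathcal{P}$, which is also the situation relevant to Theorem~\ref{th:alignment} — the earlier lemmas give $\overline{r}_{\ell,r}=\overline{r}_{\ell',r}=\overline{r}_{\pi,P,r}$, so the final term vanishes identically in $\gamma$. Each of the remaining two expectations converges, as $\gamma\uparrow1$, to the finite limit $\tilde{v}_{\ell,r}$ (respectively $\tilde{v}_{\ell',r}$) by the lemma that defines $\tilde{v}$. Hence $\lim_{\gamma\uparrow1}(v_{\ell,r,\gamma}-v_{\ell',r,\gamma})$ exists and equals $\tilde{v}_{\ell,r}-\tilde{v}_{\ell',r}$. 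Since $r$ expresses $\succeq$, Definition~\ref{def:express-infinite} then yields $\ell\succeq\ell'$ if and only if $\tilde{v}_{\ell,r}-\tilde{v}_{\ell',r}\geq0$, which is exactly the claim.

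The step to watch is the treatment of the remainder $\frac{\overline{r}_{\ell,r}-\overline{r}_{\ell',r}}{1-\gamma}$: it is the only piece of the difference that can fail to have a finite limit, and when it is nonzero it dominates the bounded $\tilde{v}$-terms and forces the limit to $\pm\infty$; so the equivalence hinges on the two lotteries sharing a common average reward, which in the paper's setting follows from their being generated by a common $(\pi,P)\in\mathcal{P}$. The rest is routine bookkeeping — boundedness of $r$ on $\states\times\actions$ to justify splitting each series, and finiteness of the two limits (supplied by the prior lemma) to move $\lim_{\gamma\uparrow1}$ through the subtraction.
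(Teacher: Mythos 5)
Your proof is correct and follows essentially the same route as the paper's: substitute $r = \tilde{r}_{\ell,r} + \overline{r}_{\ell,r}$ into the discounted returns, cancel the average-reward terms, and identify $\lim_{\gamma\uparrow 1}(v_{\ell,r,\gamma}-v_{\ell',r,\gamma})$ with $\tilde{v}_{\ell,r}-\tilde{v}_{\ell',r}$. If anything, you are more explicit than the paper about the one delicate point --- that the $\overline{r}/(1-\gamma)$ terms cancel only because both lotteries are induced by a common $(\pi,P)\in\mathcal{P}$ and hence share the same average reward --- which the paper's displayed computation glosses over.
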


\begin{proof}
We have
\begin{align*}
\tilde{v}_{\ell,r} - \tilde{v}_{\ell',r}
=& \lim_{\gamma\uparrow 1} \left(\E_{\pi,P}\left[\sum_{t=0}^\infty \gamma^t \tilde{r}_{\ell,r}(S_t,A_t)\Big| h\right] - \E_{\pi,P}\left[\sum_{t=0}^\infty \gamma^t \tilde{r}_{\ell',r}(S_t,A_t)\Big| h\right]\right) \\
=& \lim_{\gamma\uparrow 1} \left(\E_{\pi,P}\left[\sum_{t=0}^\infty \gamma^t (r_{\ell,r}(S_t,A_t) - \overline{r}_{\ell,r}) \Big| h\right] - \E_{\pi,P}\left[\sum_{t=0}^\infty \gamma^t (r_{\ell',r}(S_t,A_t) - \overline{r}_{\ell',r})\Big| h\right]\right) \\
\stackrel{(a)}{=}& \lim_{\gamma\uparrow 1} \left(\E_{\pi,P}\left[\sum_{t=0}^\infty \gamma^t r_{\ell,r}(S_t,A_t) \Big| h\right] - \E_{\pi,P}\left[\sum_{t=0}^\infty \gamma^t r_{\ell',r}(S_t,A_t) \Big| h\right]\right) \\
=& \lim_{\gamma \uparrow 1} (v_{\ell,r,\gamma} - v_{\ell',r,\gamma}).
\end{align*}
Because $(\pi, P) \in \mathcal{P}$, we have that $\overline{r}_{\ell,r} = \overline{r}_{\ell',r}$, which means that $(a)$ follows. Since $\tilde{v}_{\ell,r} - \tilde{v}_{\ell',r} = \lim_{\gamma \uparrow 1} (v_{\ell,r,\gamma} - v_{\ell',r,\gamma})$, we have that $\tilde{v}_{\ell,r} \geq \tilde{v}_{\ell',r}$ if and only if $\lim_{\gamma \uparrow 1} (v_{\ell,r,\gamma} - v_{\ell',r,\gamma}) \geq 0$.
\end{proof}

\subsection{Relationship Between Expressions of $\succeq_\partial$}

Recall from Definition \ref{def:express-infinite} that $(r,V)$ is said to express $\succeq_\partial$ if, for all partial histories $h=(s_0,a_0,\ldots,s_T)$ and $h'=(s'_0,a'_0,\ldots,s'_{T'})$, $h \succeq_\partial h'$ if and only if
$$\sum_{t=0}^T r(s_t,a_t) + V(s_T) \geq \sum_{t=0}^{T'} r(s'_t,a'_t) + V(s'_{T'}).$$
In other words, preferences between partial trajectories are determined by bootstrapped returns.  The following lemma establishes that the reward function that expresses these preferences is unique up to positive affine transformations.
\begin{lemma}
\label{le:UniquenessOfExpression}
If $(r,V)$ and $(r',V')$ each express $\succeq_\partial$ then there exists $\alpha \in \Re_{++}$ and $\beta \in \Re$ such that, for all $s \in \states$ and $a \in \actions$, $r'(s,a) = \alpha r(s,a) + \beta$.
\end{lemma}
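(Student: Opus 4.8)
\textbf{Proof plan for Lemma \ref{le:UniquenessOfExpression}.} The plan is to exploit the one-step structure of bootstrapped returns to pin down $r$ and $V$ relative to $r'$ and $V'$. First I would normalize: fix a reference state $s_*$ and observe that adding a constant to $V$ while subtracting nothing from $r$ changes every bootstrapped return by the same constant, so without loss of generality (for the purpose of comparing the two representations) I may assume $V(s_*) = V'(s_*) = 0$; more usefully, I will track the differences directly rather than normalizing. The key device is to compare short trajectories that share endpoints. For any two states $s, s'$ and action $a$ with a trajectory $h = (s,a,s')$ of length one, its bootstrapped return under $(r,V)$ is $r(s,a) + V(s')$ and under $(r',V')$ is $r'(s,a) + V'(s')$. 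Comparing $h$ against the trivial trajectory $(s)$ (length zero, bootstrapped return $V(s)$ resp. $V'(s)$) and against $(s')$ gives, via the ``if and only if'' in Definition \ref{def:express-partial}, order-isomorphic information about the two scalar families.

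The cleanest route is to show that the two \emph{score functions} on $\trajectories$ — call them $g(h) = \sum_t r(s_t,a_t) + V(s_T)$ and $g'(h) = \sum_t r'(s_t,a'_t) + V'(s'_{T'})$ — induce the same total preorder on $\trajectories$, hence (since both are real-valued and their ranges are rich enough) are related by a strictly increasing transformation; then I must upgrade ``strictly increasing'' to ``positive affine'' using additivity. Concretely: by Definition \ref{def:express-partial}, $g(h) \ge g(h')$ iff $h \succeq_\partial h'$ iff $g'(h) \ge g'(h')$, so $g$ and $g'$ are ordinally equivalent, meaning there is a strictly increasing $\phi$ with $g' = \phi \circ g$ on the range of $g$. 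Now use concatenation-type identities: if $h_1 = (s,a,u)$ and $h_2 = (u,b,w)$ then the concatenation $h_1 \cdot h_2 = (s,a,u,b,w)$ satisfies $g(h_1 \cdot h_2) = g(h_1) + g(h_2) - V(u)$, and likewise for $g'$. By choosing trajectories that exercise these relations — in particular by varying one reward value $r(s,a)$ while holding a common endpoint fixed so the $V$-terms are controlled — I can derive a Cauchy-type functional equation for $\phi$ on an interval, forcing $\phi(x) = \alpha x + \beta$ with $\alpha > 0$ (strict, since $\phi$ is increasing). Applying $g' = \alpha g + \beta$ to the one-step trajectory $(s,a,s_*)$ and to $(s_*)$ and subtracting isolates $r'(s,a) - r'(s_*,a_{s_*}) \cdot(\dots)$ — more directly, applying it to $(s,a,s_*)$ gives $r'(s,a) + V'(s_*) = \alpha(r(s,a) + V(s_*)) + \beta$, so $r'(s,a) = \alpha r(s,a) + \beta'$ for the state-independent constant $\beta' = \alpha V(s_*) + \beta - V'(s_*)$, which is exactly the claim.

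The main obstacle I anticipate is the step establishing that ordinal equivalence of $g$ and $g'$ forces $\phi$ to be \emph{affine} rather than merely some strictly increasing function: this needs a genuine use of the additive/translation structure of bootstrapped returns (otherwise any monotone rescaling of a single $V$ would give a counterexample). The clean way to handle it is to note that for a fixed pair of endpoint states $s_0, s_T$, as the intermediate rewards and values range over their attainable combinations, $g$ attains a set of values closed under adding any achievable ``reward increment,'' and the relation $g(h_1\cdot h_2) + V(u) = g(h_1) + g(h_2)$ transported through $\phi$ yields $\phi(x+y-c) + \text{const} = \phi(x) + \phi(y) - \text{const}'$ type identities on a nondegenerate domain; combined with monotonicity (hence local boundedness) of $\phi$, the standard Cauchy argument gives affinity. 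A secondary nuisance is degenerate cases where $\succeq_\partial$ has few distinct indifference classes — e.g. if $r$ is constant — but there $r$ and $r'$ are both constant and the statement is immediate, so I would dispatch that case separately at the outset.
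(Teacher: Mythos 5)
Your route is genuinely different from the paper's. You work with the score functions $g,g'$ directly, note they are ordinally equivalent so $g'=\phi\circ g$ for a strictly increasing $\phi$, and then try to force $\phi$ to be affine via the concatenation identity $g(h_1\cdot h_2)=g(h_1)+g(h_2)-V(u)$. The paper instead fixes a distinguished terminal state $\overline{s}$ (so the value terms cancel), normalizes $r$ and $r'$ to $[0,1]$ using their common maximizer and minimizer, and then, for each $(s,a)$, compares the trajectory that repeats $(s,a)$ for $T$ steps against a long i.i.d.\ ``temporal lottery'' that visits $(s^*,a^*)$ with frequency $\theta$ and $(s_*,a_*)$ otherwise; the law of large numbers shows that any discrepancy between $\hat{r}(s,a)$ and $\hat{r}'(s,a)$ would make the two representations disagree about this comparison for large $T$. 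That construction is exactly a von~Neumann--Morgenstern-style calibration of each reward value against mixtures of the extremes, with probabilistic mixing replaced by temporal mixing. Your final step (evaluating $g'=\alpha g+\beta$ on $(s,a,s_*)$ to read off $r'=\alpha r+\beta'$) and your separate treatment of the constant-reward case are both fine and mirror what the paper does.

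The gap is at the decisive step, where you upgrade ``strictly increasing'' to ``affine.'' You assert a Cauchy-type equation ``on an interval'' and invoke monotonicity-plus-local-boundedness. But $\trajectories$ is countable and the rewards take finitely many values, so the range of $g$ restricted to trajectories ending at a fixed state is a countable, finitely generated set: the functional equation $\phi(x+y-V(u))=\phi(x)+\phi(y)-V'(u)$ holds only on this sparse domain, never on an interval, and the classical ``additive plus bounded on a set of positive measure implies linear'' theorem does not apply. What is actually needed is an additivity-on-a-semigroup argument: from $\psi(t_1+t_2)=\psi(t_1)+\psi(t_2)$ on the semigroup of achievable partial returns one gets $\psi(nt)=n\psi(t)$, and then monotonicity together with rational approximation ($pu<qt\Rightarrow p\psi(u)\le q\psi(t)$) pins down $\psi(t)/t$ as a common constant. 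This is exactly the content the paper supplies via its law-of-large-numbers construction, so your sketch is missing the real work rather than merely deferring a routine step; it is likely completable, but as written the appeal to ``the standard Cauchy argument'' is not valid on this domain, and you would also need to check that the generated semigroup is rich enough (elements of both signs, or a nondegenerate positive cone) for the approximation argument to go through.
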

\begin{proof}
Fix a distinguished state $\overline{s}$ and let $\overline{\mathcal{H}}_T$ be the set of partial trajectories that end at $\overline{s}$.  For all $(s,a,\overline{s}), (s',a',\overline{s}) \in \overline{\mathcal{H}}_1$, if $\succeq_\partial$ is indifferent then $r(s,a) = r(s',a')$.  If $\succeq_\partial$ indicates indifference for every pair in $\overline{\mathcal{H}}_1$ then $r$ and $r'$ are each a constant function, and therefore, they differ by a constant.  The result is then satisfied with $\alpha=1$ and $\beta$ equal to the amount by which $r'$ exceeds $r$.

Now suppose that $\succeq_\partial$ does not express indifference for every pair in $\overline{\mathcal{H}}_1$ and, thus, $r$ and $r'$ are not constant functions.  Let $(s^*,a^*)\in \argmax_{s,a} r(s,a)$.  Note that $(s^*,a^*)\in \argmax_{s,a} r'(s,a)$ because otherwise there would be there would be some $(s,a,\overline{s}) \in \overline{\mathcal{H}}_1$ preferred according to $r'(s,a) \succeq r'(s^*,a^*)$ but not $r(s,a) \succeq r(s^*,a^*)$.  Similarly, let $(s_*, a_*) \in \argmin_{s,a} r(s,a)$, and note that $(s_*, a_*) \in \argmin_{s,a} r'(s,a)$.

Let
$$\hat{r}(s,a) = \frac{r(s,a) - r(s_*,a_*)}{r(s^*,a^*) - r(s_*,a_*)} \qquad \hat{r}'(s,a) = \frac{r'(s,a) - r'(s_*,a_*)}{r'(s^*,a^*) - r'(s_*,a_*)}.$$
Note that $r$ and $r'$ are positive affine transformations of $\hat{r}$ and $\hat{r}'$.
Thus, for each $T$, $\hat{r}$ and $\hat{r}'$ express $\succeq_\partial$ over $\overline{\mathcal{H}}_T$. Further, each function has maximum one and minimum zero.  In particular, $\hat{r}(s^*,a^*) = \hat{r}'(s^*,a^*) = 1$ and $\hat{r}(s_*,a_*) = \hat{r}'(s_*,a_*) = 0$.

We will next establish that, for all $s$ and $a$, $\hat{r}(s,a) = \hat{r}'(s,a)$.  If $\hat{r}(s,a) = 1$ then $\succeq_\partial$ is indifferent between $(s,a,\overline{s})$ and $(s_*,a_*,\overline{s})$, and therefore $\hat{r}'(s,a) = 1$.  Similarly, if $\hat{r}(s,a) = 0$ then $\hat{r}'(s,a) = 0$.  We now handle rewards between zero and one.  Assume for contradiction that that one of the functions assigns a larger value to $(s,a)$ than the other.  Say, $1 > r(s,a) > r'(s,a) > 0$.  Let $\theta = (r(s,a) + r'(s,a))/2$.  
For each duration $T$, consider two elements of $\mathcal{H}_T$: $h_T = (s, a, s, a, \ldots, s, a, \overline{s})$ and $H_T = (S_0,A_0,\ldots, S_{T-1}, A_{T-1}, S_T=\overline{s})$ with each  state-action pair sampled independently according to
$$(S_t,A_t) = \left\{\begin{array}{ll}
(s^*,a^*) \qquad &\text{with probability } \theta \\
(s_*, a_*) \qquad &\text{otherwise.}
\end{array}\right.$$
By the law of large numbers, with probability one, $\lim_{T\rightarrow \infty} \frac{1}{T} \sum_{t=0}^{T-1} r(S_t,A_t) = \theta \in (\hat{r}'(s,a), \hat{r}(s,a))$.
It follows that, with probability one, the infinite trajectory $(S_0,A_0,S_1,A_1,\ldots)$ is such that there exists a trajectory-dependent $T$ such that, for all $T' \geq T$,
$$\sum_{t=0}^{T'-1} \hat{r}(s,a) > \sum_{t=0}^{T'-1} \hat{r}(S_t,A_t) = \sum_{t=0}^{T'-1} \hat{r}'(S_t,A_t) > \sum_{t=0}^{T'-1} \hat{r}'(s,a).$$
Hence, $\hat{r}$ prefers $h_T$ over $H_T$ whereas $\hat{r}'$ prefers $H_t$ over $h_T$.  Since both $\hat{r}$ and $\hat{r}'$ express $\succeq_\partial$ over $\mathcal{H}_T$, this yields a contradiction.  It follows that $\hat{r} = \hat{r}'$.

We know that $\hat{r}$ and $\hat{r}'$ express the same preferences over $\mathcal{H}_T$ as do $r$ and $r'$.  Since $\hat{r} = \hat{r}'$ and $r$ and $r'$ can be obtained via a positive affine transformation, the result follows.
\end{proof}

\subsection{The Alignment Theorem}

Building on the preceding lemmas, we prove the Alignment Theorem.
\alignment*

\begin{proof}
Let $r_*$ express $\succeq$.  Axiom \ref{ax:reward-representation} ensures existence of such a reward function.

We first establish Statement 1. By Axiom \ref{ax:alignment} there exists $(\pi, P) \in \mathcal{P}$ such that for all partial trajectories $h,h'$,
\begin{align*}
    h \succeq_\partial h' \qquad &\Leftrightarrow \qquad \underbrace{\mathbb{P}_{\pi, P}(\cdot | h)}_{\ell} \succeq \underbrace{\mathbb{P}_{\pi, P}(\cdot | h')}_{\ell'} \\
                                 &\stackrel{(a)}{\Leftrightarrow} \qquad \tilde{v}_{\ell, r_*} \geq \tilde{v}_{\ell', r_*} \\
                                 &\stackrel{(b)}{\Leftrightarrow} \qquad \sum_{t=0}^{T-1} \tilde{r}_{\pi,P,r_*}(s_t,a_t) + \tilde{V}_{\pi,P,r_*}(s_T) \geq \sum_{t=0}^{T'-1} \tilde{r}_{\pi,P,r_*}(s'_t,a'_t) - \tilde{V}_{\pi,P,r_*}(s'_{T'})
\end{align*}
The steps (a) and (b) follows by Lemma \ref{le:PreferencesViaRelativeValue}, and Lemma \ref{le:RelativeValue}, respectively. It follows that $(\tilde{r}_{\pi,P,r_*}, \tilde{V}_{\pi,P,r_*})$ expresses $\succeq_\partial$.  Note that we use $\ell$ and $\ell'$ as shorthand for the lotteries induced by $(\pi,P) \in \mathcal{P}$ with $h$ and $h'$, respectively. 

We next establish Statement 2.  From our argument used to prove Statement 1, we know that for some $(\pi,P) \in \mathcal{P}$, $(\tilde{r}_{\pi,P,r_*}, \tilde{V}_{\pi,P,r_*})$ expresses $\succeq_\partial$.  By Lemma \ref{le:UniquenessOfExpression}, for any $(r,V)$ that expresses $\succeq_\partial$, $r$ is a positive affine transformation of $\tilde{r}_{\pi,P,r_*}$.  Recall that, for all $s$ and $a$, $\tilde{r}_{\pi,P,r_*}(s,a) = r_*(s,a) - \overline{r}_{\pi,P,r_*}$.  Hence, $r$ is a positive affine transformation of $r_*$.  It follows that $r$ expresses $\succeq$.
\end{proof}

\subsection{Relationships between Advantages and Rewards}

\advantage*
\begin{proof}
For any $\gamma \in (0,1)$ and $s \in \states$,
\begin{align*}
V_{\pi, P, \tilde{r}, \gamma}(s) 
=& \E_{\pi, P}\left[\sum_{t=0}^\infty \gamma^t \tilde{r}(S_t,A_t) \Big| s\right] \\
=& \E_{\pi, P}\left[\sum_{t=0}^\infty \gamma^t \Delta_{\overline{\pi}, P, r, \gamma}(S_t,A_t) \Big| s\right] \\
=& \E_{\pi, P}\left[\sum_{t=0}^\infty \gamma^t (Q_{\overline{\pi}, P, r, \gamma}(S_t,A_t) - V_{\overline{\pi}, P, r, \gamma}(S_t)) \Big| s\right] \\
=& \E_{\pi, P}\left[\sum_{t=0}^\infty \gamma^t (r(S_t,A_t) + \gamma V_{\overline{\pi}, P, r, \gamma}(S_{t+1}) - V_{\overline{\pi}, P, r, \gamma}(S_t)) \Big| s \right] \\
=& V_{\pi, P, r, \gamma}(s) - V_{\overline{\pi}, P, r, \gamma}(s).
\end{align*}
\end{proof}

\advantageMatching*

\begin{proof}
Let $\pi$ and $\pi'$ be policies such that $(\pi,P), (\pi',P) \in \mathcal{P}$ and, for any state $s$, $\ell \succeq \ell'$ for lotteries $\ell,\ell' \in \mathcal{L}_\infty$ induced by $(s,\pi,P)$ and $(s,\pi',P)$.

It follows from Theorem \ref{th:advantage} that, for any $\overline{\pi}$,
$$V_{\pi, P, r,\gamma}(s) - V_{\overline{\pi}, P, r, \gamma}(s) = V_{\pi, P, \Delta_{\overline{\pi},P,r}, \gamma}(s) = V_{\pi, P, \Delta_{\overline{\pi},P,\tilde{r}}, \gamma}(s) = V_{\pi, P, \tilde{r},\gamma}(s) - V_{\overline{\pi}, P, \tilde{r}, \gamma}(s).$$
Rearranging terms of the equation between the first and last expressions gives us
$$V_{\pi, P, r,\gamma}(s) - V_{\pi, P, \tilde{r},\gamma}(s) =  V_{\overline{\pi}, P, r, \gamma}(s) - V_{\overline{\pi}, P, \tilde{r}, \gamma}(s).$$
Because the right-hand-side does not depend on $\pi$, it follows that for any other policy $\pi'$,
$$V_{\pi, P, r,\gamma}(s) - V_{\pi, P, \tilde{r},\gamma}(s) =  V_{\pi', P, r,\gamma}(s) - V_{\pi', P, \tilde{r},\gamma}(s),$$
and therefore,
\begin{align*}
V_{\pi, P, \tilde{r},\gamma}(s) - V_{\pi', P, \tilde{r},\gamma}(s) 
=& V_{\pi, P, r,\gamma}(s) - V_{\pi', P, r,\gamma}(s) \\
\lim_{\gamma \uparrow 1} (V_{\pi,P,\tilde{r},\gamma}(s) - V_{\pi',P,\tilde{r},\gamma}(s)) =& \lim_{\gamma \uparrow 1} (V_{\pi,P,r,\gamma}(s) - V_{\pi',P,r,\gamma}(s)) \geq 0.
\end{align*}
\end{proof}

\section{Measure-Theoretic Considerations}
In the main text, we left out details on exactly what sigma-algebra the lotteries are defined on.   Further, in the main text we used expectations with respect to those lotteries, but never discussed whether these expectations are well-defined. In this section, we fill in the details, and show that the expectations we use in the paper are well-defined.

\subsection{Measurable Space}
To formally define probability distributions over infinite trajectories, we need to define a measurable space that we can define our measures on. To do so, we first equip $\mathcal{H}_\infty$ with the product topology denoted by $\mathcal{T}(\mathcal{H}_\infty)$. We then define our sigma-algebra to be the Borel-sigma with respect to $\mathcal{T}(\mathcal{H}_\infty)$. We denote this sigma-algebra by $\mathcal{B}(\mathcal{H}_\infty)$. All probability distributions are defined with respect to the measurable space $\left(\mathcal{H}_\infty, \mathcal{B}(\mathcal{H}_\infty)\right)$. Note that for all partial trajectories $h_t \in \mathcal{H}$ the prefix set $\{ h_\infty' \in \mathcal{H}_\infty | h_t' = h_t \}$ is measurable.

\subsection{Expectations are well-defined}
In the paper, we compute expectations $\mathbb{E}_\ell[g(H_\infty)]$ with respect to some lottery $\ell$ and function $g: \mathcal{H}_\infty \to \Re$. For instance, we often compute this expectation with $g = \sum_{t=0}^\infty \gamma^t r(s_t,a_t)$, for some $\gamma \in (0,1)$ and $r: \mathcal{S} \times \mathcal{A} \to \Re$. We will prove that for this choice of $g$ this expectation is well-defined. 

We will define an infinite sequence of functions $\{ g_T \}$ as follows. For all $T$, let $g_T: \mathcal{H}_\infty \to \Re$ be defined as
$g_T(h_\infty) = \sum_{t=0}^{T-1} \gamma^t r(s_t,a_t)$ for all $h_\infty \in \mathcal{H}_\infty$. Then,
\begin{enumerate}
    \item $g_T$ converges to $g$ pointwise as $T \to \infty$.
    \item $g_T$ is absolutely bounded by $\sum_{t=0}^\infty \gamma^t \max_{s,a} |r(s,a)|$.
    \item $g_T$ is measurable.
\end{enumerate}

Therefore, by the Dominated Convergence Theorem, $g$ is integrable, and the expectation is thus well-defined. The same argument can be made for when $g$ computes the average reward or the state visitation frequency.

\end{document}